\documentclass{article}

\usepackage[final]{neurips_2026}

\usepackage[utf8]{inputenc}
\usepackage[T1]{fontenc}
\usepackage{hyperref}
\usepackage{url}
\usepackage{booktabs}
\usepackage{amsfonts}
\usepackage{amsmath}
\usepackage{amssymb}
\usepackage{amsthm}
\usepackage{nicefrac}
\usepackage{microtype}
\usepackage{xcolor}
\usepackage{graphicx}
\usepackage{multirow}
\usepackage{adjustbox}
\usepackage{subcaption}
\usepackage{bm}
\usepackage{float}

\makeatletter
\renewcommand{\@notice}{}
\makeatother

\graphicspath{{figures/}}

\newtheorem{theorem}{Theorem}[section]

\newtheorem{proposition}[theorem]{Proposition}

\newcommand{\RR}{\mathbb{R}}
\newcommand{\EE}{\mathbb{E}}

\newcommand{\calV}{\mathcal{V}}
\newcommand{\calE}{\mathcal{E}}
\newcommand{\calN}{\mathcal{N}}

\newcommand{\bfX}{\mathbf{X}}

\usepackage[section]{placeins}

\title{Spectral Reversal: Counteracting Singular Value Bias for Graph Prompting}

\author{%
  Hanxu Yang\thanks{Equal contribution.}, \hspace{0.2cm}
  Yuhuan Zhao\footnotemark[1], \hspace{0.2cm}
  Xiaodong He, \hspace{0.2cm}
  Zhao Kang\thanks{Corresponding author.} \\
  School of Computer Science and Engineering,\\
  University of Electronic Science and Technology of China, Chengdu, Sichuan, China\\
  \texttt{yang\_hanxu@std.uestc.edu.cn \quad zkang@uestc.edu.cn} \\
}

\begin{document}

\maketitle

\begin{abstract}
Pre-training Graph Neural Networks (GNNs) via self-supervised learning has become a dominant paradigm, yet efficiently adapting frozen encoders remains a challenge. Graph prompting offers a parameter-efficient alternative to fine-tuning, but existing methods largely treat pre-trained models as opaque feature extractors, ignoring their internal spectral structure. In this work, we identify a systematic phenomenon in pre-trained GNNs, which we term spectral bias: optimization during pre-training disproportionately aligns representations with directions associated with large singular values, leaving low-energy directions under-explored. We show that these underutilized directions can encode complementary information that is beneficial for downstream adaptation, especially under distribution shift. To leverage this insight, we propose Spectral Reverse Prompt (SRP), a prompting framework that rebalances the spectral contributions of frozen GNN encoders. SRP applies a learnable soft-thresholding mask in the spectral domain to down-weight dominant directions while amplifying weaker ones. In addition, SRP incorporates a null-space augmentation module that captures variation in directions with minimal activation under the frozen encoder. Extensive experiments across multiple benchmarks demonstrate that SRP achieves state-of-the-art performance with minimal additional parameters, highlighting that reweighting spectral components is a principled and effective strategy for parameter-efficient graph adaptation. The code is available at \url{https://github.com/keris-yang/Spectral-Reversal}.
\end{abstract}
\vspace{-1.0em}

\section{Introduction}
\label{sec:intro}

Graph prompt tuning~\cite{sun2023survey,gplsurvey} has emerged as a parameter-efficient alternative to fine-tuning for adapting pre-trained Graph Neural Networks~\cite{kipf2016gcn,velivckovic2017gat,xu2018gin,hamilton2017graphsage,brody2021gatv2} (GNNs). Existing approaches typically introduce learnable tokens~\cite{fang2023gpf,edgeprompt,li2025instanceaware} or modify readout layers~\cite{sun2022gppt,yu2024multigprompt,yu2024hgprompt}, while keeping the backbone encoder frozen. However, these methods implicitly treat the pre-trained model as a black-box feature extractor, overlooking its internal structure.

In particular, little attention has been paid to the spectral properties of the learned transformations within pre-trained GNNs. During self-supervised pre-training, optimization tends to align representations with dominant modes of variation, which are often associated with large singular values of the effective transformation. We refer to this phenomenon as spectral bias, where dominant spectral components are emphasized while lower-energy directions remain underutilized. While these dominant directions capture prevalent patterns in pre-training data, they may not align with the requirements of downstream tasks~\cite{huang2024measuring}, especially under inter-class inequality.

This observation suggests a different perspective on parameter-efficient adaptation: rather than further reinforcing dominant directions, it may be more effective to reweight the spectral contributions of the frozen encoder, allocating capacity to directions that were under-explored during pre-training. To this end, we propose \textbf{Spectral Reverse Prompt (SRP)}, a framework that explicitly rebalances spectral contributions through two complementary mechanisms. First, a spectral channel applies a learnable soft-thresholding mask to modulate singular directions, suppressing dominant components while amplifying weaker ones. Second, a null-space channel extracts informative variation from directions that are minimally expressed by the frozen encoder, further expanding the effective representation space.

By focusing adaptation on underutilized spectral components, SRP improves parameter efficiency while enhancing robustness to distribution shift. Empirical results across diverse benchmarks demonstrate that this spectral rebalancing leads to consistent performance gains over existing graph prompting methods across node-level and graph-level benchmarks. Our main contributions can be summarized as follows:

\begin{itemize}
    \item We identify and quantify a form of spectral bias in pre-trained graph models using a \textbf{Downstream–Pretraining Misalignment Score} (DPMS), showing that dominant singular directions are disproportionately aligned with pre-training objectives, while lower-energy directions remain underutilized for downstream tasks.
.
    \item We propose SRP, a parameter-efficient graph prompting framework that rebalances spectral contributions of frozen encoders via two complementary components: (i) a learnable soft-thresholding spectral mask that suppresses dominant directions and amplifies weaker ones, and (ii) a null-space augmentation module that captures variation in directions minimally expressed by the encoder.
.
    \item We show that emphasizing underutilized spectral components leads to more effective and robust adaptation, particularly in low-resource settings.

\end{itemize}

\section{Related Work}
\label{sec:related}

\paragraph{GNNs and pre-training.}
Numerous studies propose training powerful GNNs via self-supervised learning, which is broadly categorized into two paradigms: contrastive and generative methods. Contrastive methods aim to maximize agreement between augmented instances. Representative works include DGI~\cite{velivckovic2019dgi} and InfoGraph~\cite{sun2019infograph}, which focus on mutual information across different structural scales; GraphCL~\cite{you2020graphcl}, GRACE~\cite{GRACE}, and SimGRACE~\cite{xia2022simgrace} generate contrastive views via structural augmentation, feature masking, or parameter perturbation. Other paradigms avoid negative sampling through bootstrapping~\cite{BGRL} or exploit domain-specific motifs~\cite{zhang2021mgssl}. Conversely, generative methods reconstruct specific graph properties: GraphMAE and GraphMAE2~\cite{hou2022graphmae,GraphMAE2} focus on masked node features, while earlier strategies~\cite{hu2019strategies} leverage edge prediction. This paradigm of pre-training then fine-tuning has become the core of research in graph domain adaptation~\cite{fang2025benefits}, graph foundation models~\citep{wang2024gft,MDGFM,BRIDGE,he2026structurecentric,he2026scgfmartamortizedrelationaltransport} and graph reasoning mechanisms~\cite{xie2026clusteringreasoningkmeansinterpretation} in recent years.

\paragraph{Graph tuning methods.}
Graph prompt tuning and low-rank adaptation, or LoRA, offer parameter-efficient alternatives by optimizing minimal components alongside the backbone. GraphLoRA~\cite{GraphLoRA} extends traditional LoRA~\cite{hu2022lora} by incorporating structure-aware contrastive losses. Meanwhile, graph prompting methods typically inject learnable tokens into various stages while preserving fixed message-passing rules. These approaches target different levels: node-feature prompting via GPF~\cite{fang2023gpf}, edge-level adaptation through EdgePrompt~\cite{edgeprompt}, and topology modifications in GraphTOP\citep{fu2026graphtop}, UniPrompt~\cite{UniPrompt} and CurvPrompt~\cite{wang2026dynamicgraphpromptingtopologyrouted}. Further research explores representation-level prompting such as GraphPrompt~\cite{liu2023graphprompt} and GPPT~\cite{sun2022gppt}, as well as specialized designs: virtual-node prompting~\cite{tan2023vnt} and instance-aware schemes~\cite{li2025instanceaware}. Despite their success, these approaches primarily adapt inputs or structures without explicitly analyzing the consensus between prompting mechanisms and weight metrics. This gap motivates our work: we rebalance spectral contributions and utilize the null space to enable universal parameter efficient, task-driven adaptation.

\section{Notations and Preliminaries}
\label{sec:prelim}

\textbf{General Graphs.} Let $\mathcal{G}=(\mathcal{V},\mathcal{E}, \mathbf{X}, \mathcal{Y})$ be a graph with $N$ nodes, where $\mathcal{V}=\{ v_1, \dots, v_{N} \}$ and $\mathcal{E} \subseteq \mathcal{V} \times \mathcal{V}$ define the node and edge sets, respectively. The node attributes are captured by a feature matrix $\mathbf{X} \in \mathbb{R}^{N \times F}$, with each row $\mathbf{X}_i$ representing the $F$-dimensional feature of node $v_i$. Structural connectivity is denoted by the adjacency matrix $\mathbf{A} \in \{0, 1\}^{N \times N}$, where $\mathbf{A}_{ij}=1$ indicates an existing edge between $v_i$ and $v_j$. Furthermore, each node $v_i$ corresponds to a specific label $y_i$ from the label space $\mathcal{Y}$. Note that $P(\cdot)$ represents a probability distribution, which is utilized conceptually in our context rather than for strict mathematical derivations.

\textbf{Graph Prompt Tuning.} Traditional pretrain-finetuning paradigms jointly update a pretrained graph encoder $f_{\theta}$ and a task-specific head $g_{\phi}$ using downstream data. In contrast, graph prompt learning freezes $f_{\theta}$ and optimizes only a newly introduced set of prompt parameters $\Psi$. The generalized objective for such prompt-based adaptation on a downstream dataset $\mathcal{D}$ is formulated as:
\begin{equation}
    \max_{\Psi} \frac{1}{|\mathcal{D}|} \sum_{(\mathbf{A}, \mathbf{X}, y) \in \mathcal{D}} \sum_{i=1}^{N} \log P\left(y_{i} \mid \text{Predict}_{\Psi}\left(\mathbf{A}, \mathbf{X}, v_{i}; f_{\theta}\right)\right),
\end{equation}
where $\mathcal{D} = \{ ( \mathbf{A}, \mathbf{X}, y )\}$. Here, $\text{Predict}_{\Psi}(\cdot)$ serves as a generic prediction module that leverages the structural topology $\mathbf{A}$, node features $\mathbf{X}$, the target node $v_i$, and the frozen backbone $f_{\theta}$ to estimate the corresponding label $y_i$. Depending on the architecture, the prompt $\Psi$ can be injected at various stages: modifying the raw topological and feature inputs (input-level), embedding into intermediate network layers (layer-wise), or adjusting the final output embeddings before the classifier (representation-level).

\textbf{Singular Value Decomposition.}
For any given weight matrix $\mathbf{W} \in \mathbb{R}^{m \times n}$ with rank $r$, its compact Singular Value Decomposition (SVD) and the induced right null space are given by
\begin{equation}
\mathbf{W}=\mathbf{U}\boldsymbol{\Sigma}\mathbf{V}^{\top},
\qquad
\ker(\mathbf{W})
=\operatorname{span}(\mathbf{V})^{\perp}
=\left\{\mathbf{x}\in\mathbb{R}^{n}\mid \mathbf{V}^{\top}\mathbf{x}=0\right\}.
\end{equation}
In this factorization, $\mathbf{U} \in \mathbb{R}^{m \times r}$ and
$\mathbf{V} \in \mathbb{R}^{n \times r}$ consist of orthonormal columns
known as the left and right singular vectors, respectively. The term
$\boldsymbol{\Sigma} \in \mathbb{R}^{r \times r}$ is a diagonal matrix
containing the sorted, positive singular values
$\{\sigma_i\}_{i=1}^{r}$ of $\mathbf{W}$ in descending order.
Accordingly, the right null space of $\mathbf{W}$ is the orthogonal
complement of the subspace spanned by its right singular vectors.

\section{The Spectral Bias of Pre-trained GNNs}
\label{sec:spectral_bias}

To establish our theoretical framework, we investigate a frozen GNN encoder with a weight matrix $W \in \mathbb{R}^{  d_\text{out} \times d_\text{in}}$ at an arbitrary layer. For downstream adaptation, we assume $W_\text{down} \in \mathbb{R}^{d_\text{in} \times C}$ represents the optimal linear classifier trained directly on raw node features.

\paragraph{Origins of Spectral Bias.}
The spectral bias in pre-trained GNNs fundamentally stems from two
compounded phenomena: the nature of gradient descent and the graph
message-passing mechanism. During pre-training, $W$ is optimized over
$T$ steps of gradient descent. Each step applies a rank-1 update
$\Delta W^{(t)}=-\eta_t g^{(t)}(h^{(t)})^\top$ based on the upstream
gradient $g^{(t)}$ and layer input $h^{(t)}$. For a GNN, let
$h^{(t)}=\phi(\widehat A\widetilde W^{(t)}\widetilde h^{(t)})$ and let
the adjacency matrix be eigendecomposed as
$\widehat A = Q\Lambda Q^\top$, the cumulative update is
\begin{equation}
\resizebox{0.98\columnwidth}{!}{$\displaystyle
\Delta W
= W^{(T)}-W^{(0)}
= -\sum_{t=1}^{T}\eta_t g^{(t)}(h^{(t)})^\top
= -\sum_{t=1}^{T}\eta_t g^{(t)}
\Big[\phi\!\big(Q\Lambda Q^\top \widetilde W^{(t)}\widetilde h^{(t)}\big)\Big]^\top
$}
\label{eq:cumulative}
\end{equation}
Since each update deposits energy along $h^{(t)}$, the spectral
structure of $\Delta W$ is governed by the input covariance
$\Sigma_h=\mathbb{E}_t[h^{(t)}(h^{(t)})^\top]$. Let
$\Sigma_h=\sum_i\lambda_i e_i e_i^\top$
(with $\lambda_1\geq\lambda_2\geq\cdots$) and
$P_r=\sum_{i=1}^{r}e_i e_i^\top$ be the projection onto the top-$r$
principal directions.

\begin{proposition}[Spectral Alignment and Concentration]
If the training inputs satisfy
$\|(I-P_r)h^{(t)}\|_2\leq\delta\|h^{(t)}\|_2$ for all $t$, and the
cross-step gradient terms vanish, then the cumulative gradient update
inherits this concentration:
\begin{equation}
\frac{\|P_r\Delta W^\top\|_F^2}{\|\Delta W\|_F^2}
=1-\frac{\|(I-P_r)\Delta W^\top\|_F^2}{\|\Delta W\|_F^2}
\geq 1-\delta^2 .
\label{eq:gradient_conc}
\end{equation}
\label{prop:spectral_align}
\end{proposition}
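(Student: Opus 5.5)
The plan is a direct Frobenius-norm computation, with the hypothesis ``cross-step gradient terms vanish'' interpreted so that the norm of the sum splits into a sum of per-step norms.

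\textbf{Setup and equality.} Write $\Delta W^\top=-\sum_{t}\eta_t h^{(t)}(g^{(t)})^\top$. Since $P_r$ is an orthogonal projection, every column $c$ of $\Delta W^\top$ satisfies $\|c\|_2^2=\|P_rc\|_2^2+\|(I-P_r)c\|_2^2$. Summing over columns gives
\begin{equation*}
\|\Delta W\|_F^2=\|P_r\Delta W^\top\|_F^2+\|(I-P_r)\Delta W^\top\|_F^2 .
\end{equation*}
Dividing by $\|\Delta W\|_F^2$, which is assumed nonzero (otherwise the ratio is undefined), yields the equality in \eqref{eq:gradient_conc}. It therefore suffices to prove
\begin{equation*}
\|(I-P_r)\Delta W^\top\|_F^2\le\delta^2\|\Delta W\|_F^2 .
\end{equation*}

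\textbf{Per-step bound.} For a rank-1 term $M=A h g^\top$ with $A$ any fixed matrix, $\|M\|_F=\|Ah\|_2\|g\|_2$. Taking $A=I-P_r$ and using the input concentration hypothesis gives
\begin{equation*}
\|(I-P_r)h^{(t)}(g^{(t)})^\top\|_F^2\le\delta^2\|h^{(t)}\|_2^2\|g^{(t)}\|_2^2=\delta^2\|h^{(t)}(g^{(t)})^\top\|_F^2 .
\end{equation*}
Summing over $t$ with weights $\eta_t^2$ gives
\begin{equation*}
\sum_t\eta_t^2\|(I-P_r)h^{(t)}(g^{(t)})^\top\|_F^2\le\delta^2\sum_t\eta_t^2\|h^{(t)}(g^{(t)})^\top\|_F^2 .
\end{equation*}

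\textbf{Removing the cross terms.} This is the main obstacle. Expanding the squared norm of the sum produces cross terms of the form
\begin{equation*}
\eta_s\eta_t\langle h^{(s)}(g^{(s)})^\top,h^{(t)}(g^{(t)})^\top\rangle_F=\eta_s\eta_t\,(h^{(s)\top}h^{(t)})(g^{(s)\top}g^{(t)}).
\end{equation*}
For the projected sum, the corresponding cross terms are the same expression with $h^{(s)},h^{(t)}$ replaced by $(I-P_r)h^{(s)},(I-P_r)h^{(t)}$.

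I would state the hypothesis precisely as: both families of cross terms vanish. This holds in particular if the gradients are mutually orthogonal, $g^{(s)\top}g^{(t)}=0$ for $s\ne t$, since that factor is shared by both families. Under this reading,
\begin{equation*}
\|\Delta W\|_F^2=\sum_t\eta_t^2\|h^{(t)}(g^{(t)})^\top\|_F^2,\qquad \|(I-P_r)\Delta W^\top\|_F^2=\sum_t\eta_t^2\|(I-P_r)h^{(t)}(g^{(t)})^\top\|_F^2 .
\end{equation*}
Substituting these two Pythagorean decompositions into the per-step bound proves the inequality.

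The hypothesis must cover both families. If only the unprojected cross terms vanished, the projected cross terms could be positive and the bound could fail. So the proof should explicitly note that gradient orthogonality, or vanishing in expectation when the statement is read via $\Sigma_h$, is the assumption used.
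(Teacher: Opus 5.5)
Your proof is correct and follows the same route as the paper's: the rank-one identity $\|(I-P_r)h g^\top\|_F=\|(I-P_r)h\|_2\|g\|_2$ gives the per-step bound, the steps are combined with cross terms dropped, and the Pythagorean split of the Frobenius norm gives the equality. Two things you add are not in the paper. You work directly from the pointwise hypothesis, whereas the appendix proves an expectation version with an averaged condition and an independence assumption. You also state exactly which cross terms must vanish: both the projected and unprojected families, e.g.\ via $g^{(s)\top}g^{(t)}=0$. The paper leaves this implicit when it passes from per-step norms to an expectation over $t$, and your warning that vanishing of the unprojected family alone is insufficient is correct.
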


The detailed proof is provided in the Appendix~\ref{sec:app_proofs}. Proposition \ref{prop:spectral_align} reveals that when inputs are confined to a low-dimensional subspace, gradient updates restrict their energy to the same principal subspace. Crucially, this gradient-induced bias is compounded by GNN message passing $h^{(\ell)} = \phi(\hat{A}\,W^{(\ell)}h^{(\ell-1)})$, which acts as a graph-spectral filter. Over $L$ layers, scaling frequency components by $\lambda_i^L(\hat{A})$ causes an \emph{anisotropic contraction} of the covariance matrix. This asymptotically forces node representations to collapse toward a dominant vector, severely skewing the spectral distribution of the pre-trained weights.

\textbf{Downstream Capacity and Misalignment.}
For a frozen encoder $W$, the downstream classifier is strictly confined to the \emph{spectral reachable set} $\mathcal{H}(W) := \{h \mapsto a^\top(Wh) : a \in \mathbb{R}^{d_\text{out}}\}$. Since $\text{col}(W^\top) = \text{col}(V)$, the encoder restricts any downstream head to a $k$-dimensional subspace (where $k = \text{rank}(W)$). The minimal approximation error against the optimal downstream classifier $W_\text{down}$ is:
\begin{equation}
  \varepsilon^2(W) := \min_{A} \|W_\text{down} - W^\top A\|_F^2 = \|P_\mathcal{N}\, W_\text{down}\|_F^2,
  \label{eq:approx_bound}
\end{equation}
which is achieved at $A^* = U\Sigma^{-1}V^\top W_\text{down}$, leaving an irreducible residual defined by the null space projection $P_\mathcal{N} = I - VV^\top$. We further find that applying a computationally inexpensive PCA to the null space leads to substantially higher information density than the Encoder. As shown in Figure~\ref{fig:motivation_combined}, using only 8 dimensions, the resulting representation achieves classification performance comparable to that of a 128 dimensional Encoder obtained through costly training.

To explicitly quantify the tension between downstream demand and pre-training capacity, we introduce the \emph{Downstream-Pretraining Misalignment Score} for each singular direction $v_j$. By partitioning the singular values into quintile groups $\{Q_q\}_{q=1}^5$, we further define the aggregate \emph{net gap} ($\Delta_q$):
\begin{equation}
  \mathrm{DPMS}_j := \frac{\|W_\text{down}^\top v_j\|^2}{\sigma_j^2}, \quad 
  \Delta_q := \frac{\sum_{j \in Q_q}\|W_\text{down}^\top v_j\|^2}{\sum_j \|W_\text{down}^\top v_j\|^2} - \frac{\sum_{j\in Q_q}\sigma_j^2}{\sum_j \sigma_j^2}.
  \label{eq:metrics}
\end{equation}

\begin{figure*}[t]
  \centering
  \includegraphics[width=0.48\textwidth]{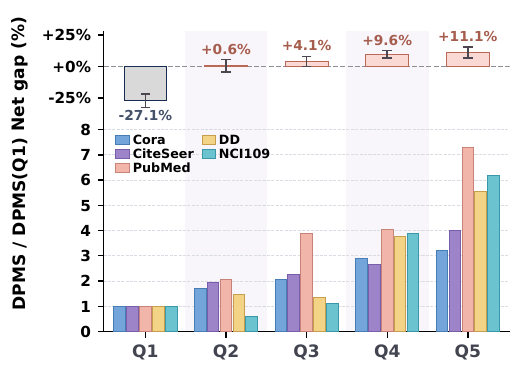}
  \hfill
  \includegraphics[width=0.48\textwidth]{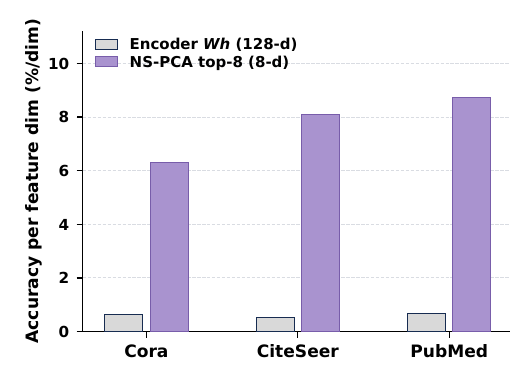}
  \caption{\textbf{Left: DPMS and $\Delta_q$ in the  singular values of the quintices of $W_{down}$.} 
  \textbf{Right: Information density of encoder and null space.} }
  \label{fig:motivation_combined}
\end{figure*}

\begin{theorem}[Spectral Reversal]
\label{thm:reversal}
The interaction between pre-trained capacity and downstream demand exhibits the following properties:
\begin{itemize}
  \item \textbf{Reachability Limits:} Target signals within $\text{col}(V)$ are fully recoverable via $a_j = \sigma_j^{-1}u_j$, whereas signals in the null space $\ker(W)$ yield an irreducible approximation error $\varepsilon^2(W) = \|P_\mathcal{N} W_\text{down}\|_F^2$.
  \item \textbf{Capacity Misalignment:} $\mathrm{DPMS}_j$ grows inversely with the pre-trained capacity $\sigma_j^2$. Consequently, severe spectral concentration, where $\sigma_i \gg \sigma_j$, forces the ratio $\mathrm{DPMS}_j/\mathrm{DPMS}_i$ to diverge, which is regionally quantified by the net gap $\Delta_q$.
\end{itemize}
\end{theorem}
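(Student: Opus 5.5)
The plan is to treat the two bullets separately. Both reduce to linear algebra on the compact SVD $W=U\Sigma V^\top$ of the Preliminaries together with the definitions in \eqref{eq:approx_bound} and \eqref{eq:metrics}. Proposition~\ref{prop:spectral_align} is used only to interpret the concentration hypothesis $\sigma_i\gg\sigma_j$; it is not needed in the derivation itself.

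For \textbf{Reachability Limits}, first take a target signal in $\mathrm{col}(V)$, namely $v_j$. With the head $a_j=\sigma_j^{-1}u_j$ we get
\[
W^\top a_j=V\Sigma U^\top \sigma_j^{-1}u_j=v_j,
\]
since $U^\top u_j=e_j$. So every singular direction, and by linearity every $x\in\mathrm{col}(V)$, is realised as $h\mapsto a^\top Wh$. For the error, solve $\min_A\|W_\text{down}-W^\top A\|_F^2$ columnwise. This is a least-squares problem whose reachable columns range exactly over $\mathrm{col}(W^\top)=\mathrm{col}(V)$. The minimiser is therefore the orthogonal projection $VV^\top W_\text{down}$, attained by $A^*=U\Sigma^{-1}V^\top W_\text{down}$, because
\[
W^\top A^*=V\Sigma U^\top U\Sigma^{-1}V^\top W_\text{down}=VV^\top W_\text{down}.
\]
The residual is $(I-VV^\top)W_\text{down}=P_\mathcal{N}W_\text{down}$. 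Pythagoras then shows that no choice of $A$ can do better, since for every $A$
\[
\|W_\text{down}-W^\top A\|_F^2=\|P_\mathcal{N}W_\text{down}\|_F^2+\|VV^\top W_\text{down}-W^\top A\|_F^2 .
\]
Finally, $P_\mathcal{N}$ projects onto $\ker(W)=\mathrm{span}(V)^\perp$, which gives the irreducibility claim.

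For \textbf{Capacity Misalignment}, fix the demand $d_j:=\|W_\text{down}^\top v_j\|^2$. Then $\mathrm{DPMS}_j=d_j/\sigma_j^2$ is strictly decreasing in $\sigma_j^2$ for $d_j>0$, which is the ``inverse growth'' claim. For the ratio,
\[
\frac{\mathrm{DPMS}_j}{\mathrm{DPMS}_i}=\frac{d_j}{d_i}\cdot\frac{\sigma_i^2}{\sigma_j^2}.
\]
Under the nondegeneracy assumption $d_j\ge c>0$ and $d_i\le \|W_\text{down}\|_F^2$, this is bounded below by $\frac{c}{\|W_\text{down}\|_F^2}(\sigma_i/\sigma_j)^2$. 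It therefore diverges as $\sigma_i/\sigma_j\to\infty$.

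For the net gap, note that $\sum_q\Delta_q=0$, since both normalised vectors are probability distributions over the indices $j$. Write $p_j=d_j/\sum_k d_k$ and $s_j=\sigma_j^2/\sum_k\sigma_k^2$. Then $\Delta_q=\sum_{j\in Q_q}(p_j-s_j)$, and we have $p_j/s_j\propto \mathrm{DPMS}_j$. Hence $\Delta_q>0$ exactly when demand in $Q_q$ exceeds capacity. Under concentration, $s_j\to 0$ for the low-$\sigma$ quintiles while $p_j$ stays bounded below, so $\Delta_q\to\sum_{j\in Q_q}p_j>0$ there. The top quintile correspondingly gets $\Delta_1<0$. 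This is the regional quantification.

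The main obstacle is that the second bullet is stated informally. ``Diverge'' requires the downstream demand $d_j$ on weak directions not to vanish as fast as $\sigma_j^2$. I would make this explicit as a hypothesis, since without it the statement is false; taking $d_j\propto\sigma_j^2$ gives a constant ratio. I would then present the divergence and the sign pattern of $\Delta_q$ as consequences under that assumption. The rest is routine projection algebra.
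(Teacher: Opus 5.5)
Your proof is correct, and for the first bullet it follows the paper's route: split $W_\text{down}=VV^\top W_\text{down}+P_\mathcal{N}W_\text{down}$, use Pythagoras to isolate the $A$-independent term $\|P_\mathcal{N}W_\text{down}\|_F^2$, and exhibit $A^*=U\Sigma^{-1}V^\top W_\text{down}$.

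Your version is cleaner in two places. First, you verify $W^\top a_j=v_j$ explicitly, which the paper never checks for the $a_j=\sigma_j^{-1}u_j$ claim. Second, you verify $W^\top A^*=VV^\top W_\text{down}$ directly. The paper instead ``solves'' $\Sigma U^\top A=V^\top W_\text{down}$ by left-multiplying with $U\Sigma^{-1}$. That step only yields $UU^\top A=U\Sigma^{-1}V^\top W_\text{down}$, so $A^*$ is the minimum-norm minimizer, not the unique one. The minimum value is unaffected.

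The paper gives no proof of the second bullet; it is left as a reading of the definition of $\mathrm{DPMS}_j$. Your ratio identity, the observation $\sum_q\Delta_q=0$ with $p_j/s_j\propto\mathrm{DPMS}_j$, and the explicit hypothesis all go beyond the paper. That hypothesis is that $\|W_\text{down}^\top v_j\|^2$ stays bounded away from zero on weak directions. It is genuinely necessary: if $\|W_\text{down}^\top v_j\|^2\propto\sigma_j^2$, the ratio $\mathrm{DPMS}_j/\mathrm{DPMS}_i$ is constant and the stated divergence fails.

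The one loose point in your own account concerns $\Delta_1<0$. Your zero-sum argument needs capacity to concentrate in the top quintile, so that every quintile $Q_2,\dots,Q_5$ tends to $\Delta_q\ge 0$. A single pair with $\sigma_i\gg\sigma_j$ does not fix the sign of $\Delta_1$, so this concentration assumption should be stated. You also tacitly need $\|W_\text{down}^\top v_i\|^2>0$ for the ratio to be defined.
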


The theorem above formalizes the fundamental tension between the pre-training and fine-tuning phases. The first property indicates that a frozen backbone inherently introduces \emph{blind spots}: any downstream feature reliance that falls into the null space of the pre-trained weights simply cannot be recovered, regardless of how the linear head is optimized. The second property further reveals that even within the reachable subspace, there is a severe mismatch in resource allocation. The metric $\mathrm{DPMS}_j$ acts as a demand-to-supply ratio. Due to spectral bias, the pre-trained model over-allocates its capacity to dominant directions, leading to a low DPMS and a negative net gap representing redundant supply. Conversely, the tail directions, which are often crucial for specific downstream tasks, are starved of capacity, resulting in a critically high DPMS and a positive net gap. This structural misalignment strongly motivates the need for targeted spectral adaptation.

\begin{theorem}[PAC-Bayes Generalization Bound]
\label{thm:pac_main}
Consider a learnable adaptor $\hat{\Theta}$ restricted to a bottleneck rank $r_s$ across exactly $|\mathcal{V}_w|$ weak spectral directions, yielding a parameter count of $d_\Theta = \mathcal{O}(|\mathcal{V}_w| r_s)$. Let $\hat{R}(\hat{\Theta})$ and $R(\hat{\Theta})$ be the empirical and expected risks on $n$ samples. Under McAllester's bound with a Gaussian prior bounded by the backbone's condition number $\kappa(W) := \sigma_\text{max}/\sigma_\text{min}$, and an analytically optimized posterior, with probability $\geq 1-\delta$:
\begin{equation}
  R(\hat{\Theta}) \;\leq\; \hat{R}(\hat{\Theta}) + \mathcal{O}\left( \sqrt{\frac{|\mathcal{V}_w| r_s \log\kappa(W) + \log(n/\delta)}{n}} \right).
  \label{eq:pac_main}
\end{equation}
\end{theorem}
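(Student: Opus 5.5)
The argument starts from McAllester's PAC-Bayes bound: for any prior $P$ fixed before seeing the data and any posterior $Q$ over the adaptor parameters, with probability at least $1-\delta$,
\[
\EE_{Q}[R] \leq \EE_{Q}[\hat{R}] + \sqrt{\frac{\mathrm{KL}(Q\|P) + \log(2\sqrt{n}/\delta)}{2(n-1)}} .
\]
Since the adaptor lives on the $|\calV_w|$ weak spectral directions with bottleneck rank $r_s$, I will parameterize it by a vector $\theta\in\RR^{d_\Theta}$ with $d_\Theta=\mathcal{O}(|\calV_w| r_s)$. Both the prior and the posterior will be isotropic Gaussians on this space: the prior is $P=\calN(0,\sigma_P^2 I)$ and the posterior is $Q=\calN(\hat\theta,\sigma_Q^2 I)$. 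All dimension dependence then enters through the closed form
\[
\mathrm{KL}(Q\|P)=\tfrac12\Big[d_\Theta\big(\tfrac{\sigma_Q^2}{\sigma_P^2}-1+\log\tfrac{\sigma_P^2}{\sigma_Q^2}\big)+\tfrac{\|\hat\theta\|^2}{\sigma_P^2}\Big].
\]

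The next step links the variances to the backbone. The adaptor acts on directions $v_j$ whose rescaling through the frozen weight is governed by $\sigma_j\in[\sigma_{\min},\sigma_{\max}]$. Perturbing $\theta$ along $v_j$ therefore changes the output at a rate proportional to $\sigma_j$. Two consequences follow:
\begin{itemize}
\item[(i)] The posterior can be as wide as $\sigma_Q\propto 1/\sigma_{\max}$ while keeping the increase in empirical risk, $\EE_Q[\hat R]-\hat R(\hat\Theta)$, at $\mathcal{O}(1/\sqrt n)$. This uses a Lipschitz or second-order Taylor argument in the loss.
\item[(ii)] Any adaptor that is useful can be normalized so that $\|\hat\theta\|^2$ is at most $d_\Theta$ times a constant over $\sigma_{\min}^2$. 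This is because the reachability property in Theorem~\ref{thm:reversal} recovers target signals via $a_j=\sigma_j^{-1}u_j$, so coefficients scale like $\sigma_j^{-1}$.
\end{itemize}
I will choose the prior scale $\sigma_P\propto 1/\sigma_{\min}$ in a data-independent way. This is the sense in which the prior is ``bounded by $\kappa(W)$'', since $W$ is frozen before downstream training. The ratio $\sigma_P^2/\sigma_Q^2$ then equals $\kappa(W)^2$, and the KL becomes $\mathcal{O}(d_\Theta\log\kappa(W))$ plus a term of order $d_\Theta$. The $\log(2\sqrt n/\delta)$ term is $\mathcal{O}(\log(n/\delta))$.

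Finally, I will ``analytically optimize'' the posterior. The KL is minimized over $\sigma_Q$ within the admissible range from (i), and a small grid or union bound over $\sigma_P$ (at a cost of an extra $\log$ factor absorbed into $\log(n/\delta)$) handles the fact that $\sigma_P$ cannot depend on $\hat\theta$. To pass from the Gibbs risk to the deterministic $\hat\Theta$, I will derandomize the posterior-averaged risk using the same perturbation control as in (i). Substituting $d_\Theta=\mathcal{O}(|\calV_w|r_s)$ gives Eq.~\eqref{eq:pac_main}.

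I expect the main obstacle to be step (i): rigorously justifying that a posterior width governed by the singular values keeps the empirical-risk perturbation controlled. This needs a Lipschitz or smoothness assumption on the loss and on the frozen nonlinear encoder, which the statement leaves implicit. I would first try a first-order Lipschitz bound in which only the weak directions are perturbed. That gives an output perturbation of at most $\sigma_{\max}\|\xi\|$, and Gaussian concentration of $\|\xi\|$ controls it. I will state the required Lipschitz constant as an explicit assumption.
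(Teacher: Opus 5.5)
Your skeleton is the paper's: McAllester's bound, an isotropic Gaussian prior and posterior, the closed-form KL, and $\|\hat\Theta\|^2=\mathcal{O}(d_\Theta/\sigma_{\min}^2)$ from the reachability coefficients $\sigma_j^{-1}$. There is a genuine gap in the sentence ``the ratio $\sigma_P^2/\sigma_Q^2$ then equals $\kappa(W)^2$''. Once step (i) is made quantitative, the constant in $\sigma_Q\propto 1/\sigma_{\max}$ cannot be independent of $n$.

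Under your own Lipschitz assumption with constant $L$, you get $|\EE_Q\hat R-\hat R(\hat\Theta)|\le L\sigma_{\max}\EE\|\xi\|\le L\sigma_{\max}\sigma_Q\sqrt{d_\Theta}$. An $\mathcal{O}(1/\sqrt{n})$ perturbation therefore forces $\sigma_Q\lesssim 1/(L\sigma_{\max}\sqrt{d_\Theta n})$. Take $\sigma_P=c/\sigma_{\min}$ and the largest admissible $\sigma_Q$. The log-determinant part of the KL is then $\tfrac{d_\Theta}{2}\log(\sigma_P^2/\sigma_Q^2)=d_\Theta\log\kappa(W)+\tfrac{d_\Theta}{2}\log(c^2L^2d_\Theta n)$. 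Even if you only ask the perturbation to match the target rate $\sqrt{d_\Theta\log\kappa(W)/n}$, you still pay roughly $\tfrac{d_\Theta}{2}\log n$.

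A second-order Taylor argument does not remove this. It only changes the constant in front of $d_\Theta\log n$ for the empirical term. It does nothing for the derandomization term $R(\hat\Theta)-\EE_Q R$, which stays first order because $\nabla R(\hat\Theta)\neq 0$. Since the extra term multiplies $d_\Theta$, it cannot be absorbed into $\log(n/\delta)$. What your route actually proves has numerator $|\mathcal{V}_w|r_s\bigl(\log\kappa(W)+\log(|\mathcal{V}_w|r_s\,n)\bigr)+\log(n/\delta)$. That is the statement only up to a logarithmic factor, or in the regime where $\kappa(W)$ grows polynomially in $n$.

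The paper's own proof takes the same route but sidesteps this point rather than resolving it. It applies McAllester's bound directly to $R(\hat\Theta)$ with a posterior ``concentrated at $\hat\Theta$'', so it never derandomizes. It also ``optimizes'' $\hat\sigma^2$ without any risk constraint. In its own KL formula that optimum is $\hat\sigma^2=\kappa^2$, where the logarithmic term vanishes altogether. Your constraint (i) is exactly what would make ``analytically optimized posterior'' meaningful, and it is also what exposes the $\log n$ cost.

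To obtain the displayed rate exactly, you can reinterpret both risks as Gibbs risks, $\EE_Q R\le\EE_Q\hat R+\cdots$, and take $\sigma_Q=c_1/\sigma_{\max}$, $\sigma_P=c_2/\sigma_{\min}$ with fixed constants $c_2\ge c_1$. Your KL computation then gives $d_\Theta\log\kappa(W)+\mathcal{O}(d_\Theta)$, and no derandomization is needed. Otherwise you must settle for $\tilde{\mathcal{O}}$. Three further points:

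\begin{itemize}
\item In either case, the leftover $\mathcal{O}(d_\Theta)$ from $\|\hat\theta\|^2/\sigma_P^2$ is absorbed into $d_\Theta\log\kappa(W)$ only if $\kappa(W)$ is bounded away from $1$. State this as an assumption.
\item The $\sigma_{\max}$-sensitivity in (i) is likewise an assumption about the abstract adaptor. In SRP the prompt reads $V^\top h$ and is added after aggregation, so its output is not rescaled by $\sigma_j$.
\item The grid over $\sigma_P$ is unnecessary. Since $\sigma_{\min}$ comes from the frozen $W$, $\sigma_P\propto 1/\sigma_{\min}$ is already data-independent. Only the constant in (ii) would need a union bound, and only if it is not fixed in advance.
\end{itemize}
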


\textbf{Proof Sketch and Motivation.} 
By Theorem~\ref{thm:reversal}, the effective signal scale per target direction is restricted by $1/\sigma_\text{min}$, which strictly bounds the parameter norm as $\|\hat{\Theta}\|^2 = \mathcal{O}(d_\Theta / \sigma_\text{min}^2)$. Optimizing the posterior variance collapses the KL divergence to $\mathcal{O}(d_\Theta \log\kappa(W))$, yielding Equation \eqref{eq:pac_main}.

Crucially, this bound demonstrates that the generalization gap depends explicitly on the number of activated weak directions $|\mathcal{V}_w|$, rather than the massive original feature dimensions $d_\text{in}d_\text{out}$. This offers a profound theoretical foundation: by deliberately restricting trainable capacity to only the highly misaligned subspace, one can rectify the spectral bias while inherently securing a significantly tighter generalization guarantee than full fine-tuning. This structural insight directly motivates our algorithmic design in the subsequent section.

\begin{figure}[t]
  \centering
  \includegraphics[width=\textwidth]{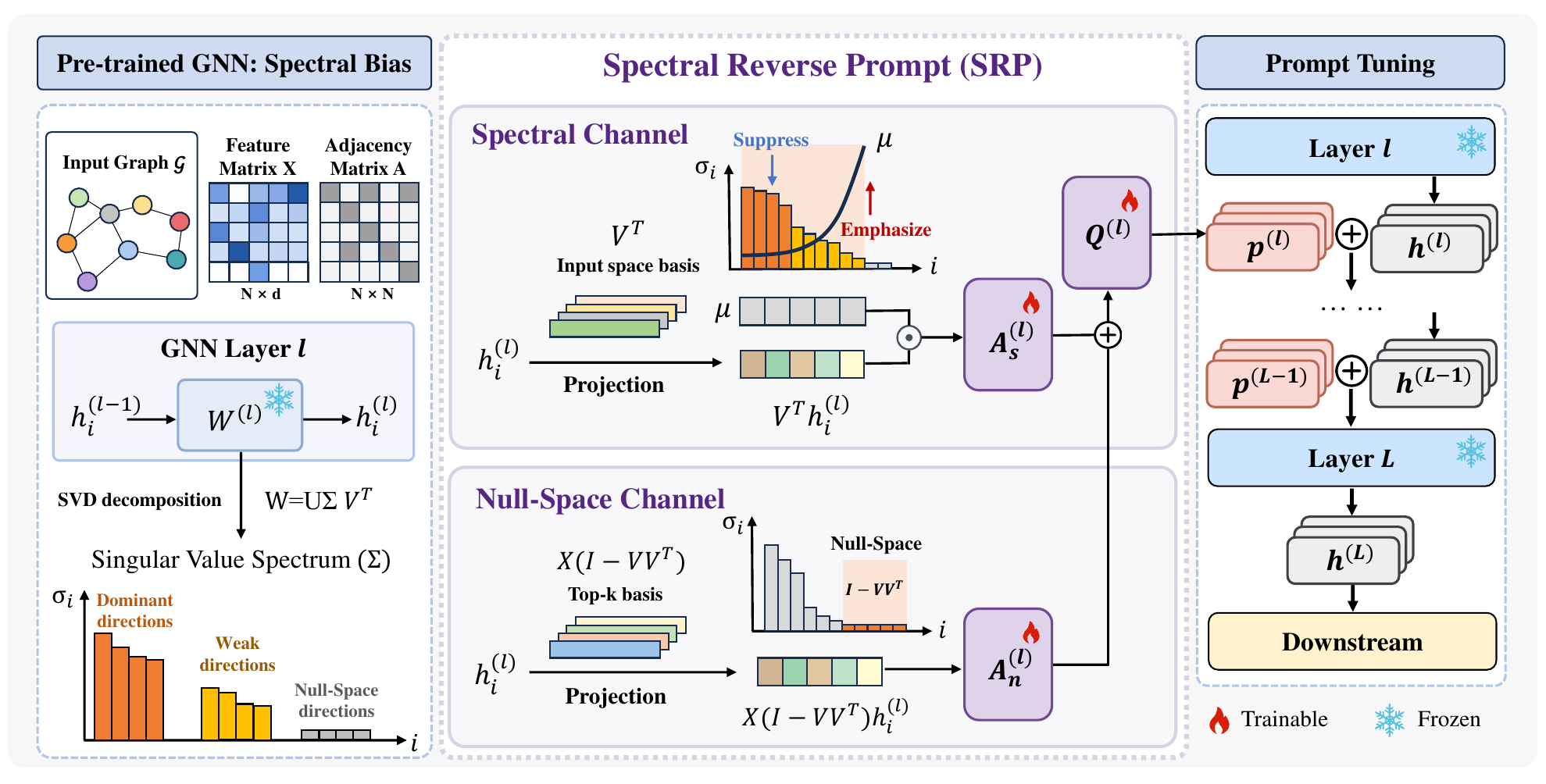}
  \caption{\textbf{Overall framework of Spectral Reverse Prompt (SRP).} 
SRP decomposes each frozen GNN layer weight to expose spectral bias, then uses a spectral channel to suppress dominant directions and amplify weak ones, and a null-space channel to exploit information that is not captured by the frozen encoder.
The two channels are fused into layer-wise prompts injected into the frozen GNN for downstream tuning.
}
  \label{fig:overview}
\end{figure}

\section{Method: Spectral Reverse Prompt}
\label{sec:method}

Motivated by the spectral reversal principle, we design our method, Spectral Reverse Prompt (SRP). Our approach introduces a layer-wise graph prompt that counteracts singular values to better align the pretrained models with downstream few-shot tasks. SRP adds prompting vector to node feature:
\begin{equation}
  \tilde{z}^{(\ell)} = \hat{A}\, W^{(\ell)} h^{(\ell-1)} + b^{(\ell)} + p^{(\ell)} (h^{(\ell-1)}).
\end{equation}
Each prompt consists of two complementary channels on orthogonal subspaces of $\RR^{d_\text{in}}$: a \emph{spectral channel} that reverses the encoder's spectral weighting, and a \emph{null-space channel} that extracts signal from directions invisible to $W$.

\subsection{Spectral Channel: Soft-Threshold Reversal}
\label{subsec:spectral_channel}

Motivated by Theorem~\ref{thm:reversal}, this channel amplifies weak directions while suppressing dominant ones.
Node features are projected onto the right-singular basis: $h_\text{spec} = V^\top h \in \RR^k$ (the spectral coordinates of $h$).
A learnable soft-threshold mask is then applied:
\begin{equation}
  \tau = \mathrm{sigmoid}(\tau_0), \qquad
  \mu_j = \mathrm{sigmoid}\!\big((\tau \sigma_\text{max} - \sigma_j) \cdot \gamma\big),
  \label{eq:mask}
\end{equation}
where $\tau_0$ is a scalar threshold parameter, $\sigma_\text{max}$ the largest singular value of $W$, $\gamma$ is a pre-modulatable temperature coefficient that can be freely adjusted according to specific experiments, and $\mu_j \in (0,1)$ the mask weight for direction $j$.
By construction $\mu_j \approx 0$ for strong directions and $\mu_j \approx 1$ for weak ones.
The masked coordinates are linearly projected: $z_\text{spec} = (h_\text{spec} \odot \mu)\, A_s$, where $A_s \in \RR^{r_s\times k}$ is a learnable bottleneck adaptor of rank $r_s$.

\subsection{Null-Space Channel}
\label{subsec:null_channel}

When $d_\text{in} > k$, the null space $\ker(W)$ has dimension $d_\text{in} - k$.
Features in this subspace are completely invisible to the frozen encoder and inaccessible to the spectral channel.
We project the feature matrix $\bfX$ onto $\ker(W)$ and extract the top-$m$ principal components:
\begin{equation}
  \bfX_{\perp} = \bfX(I - VV^\top), \qquad
  z_\text{null} = A_n\,(V_{\perp}h),
  \label{eq:null_channel}
\end{equation}
where $V_{\perp} \in \RR^{m\times d_\text{in}}$ is the orthonormal PCA basis of the null-space-projected data $\bfX_{\perp}$, and $A_n \in \RR^{r_s \times m}$ is a learnable adaptor.
By construction $WV_{\perp} = 0$ and $V^\top V_{\perp} = 0$: the two channels operate on orthogonal subspaces.
Both $V_{\perp}$ and $V$ are frozen; only $A_n$ is learned.
Figure~\ref{fig:motivation_combined} and proofs given by Appendix~\ref{sec:app_proofs} confirm that NS-PCA directions carry higher per-dimension accuracy than the encoder output and that PCA ordering substantially outperforms random null-space selection.


\subsection{Channel Fusion and Output}
\label{subsec:fusion}

The two channels are summed in a shared bottleneck, then projected to the GNN activation space:
\begin{equation}
  p(h) = Q\,\Big[\underbrace{A_s\,(V^T h \odot \mu)}_{\text{spectral channel}} \;+\; \underbrace{A_n\,(V_{\perp}h)}_{\text{null-space channel}}\Big] + b_p,
  \label{eq:srp}
\end{equation}
where $Q$ and $b_p$ are the shared projection and bias. This fusion is lossless since $V^\top V_{\perp}^\top = 0$. When $d_\text{in} \leq k + m$, SRP uses a projective fallback $p(h) = (h_\text{weak} P) Q + b_p$. Here, $h_\text{weak} = h - (hV \odot (1{-}\mu))V^\top$ directly preserves all weak spectral and null-space signals. Thus, a single adaptor $P \in \mathbb{R}^{d_\text{in} \times r_s}$ becomes mathematically isomorphic to the dual-channel architecture (Appendix~\ref{subsec:appendix_fallback}).

\subsection{Complexity Analysis}
\label{subsec:complexity}
The proposed SRP exhibits exceptional efficiency, characterized by strictly linear time complexity with respect to the graph size and a trainable parameter footprint drastically smaller than standard LoRA-style adaptations.

\textbf{Time Complexity.} Initial SVD and null-space computations are performed once offline. During training, by pre-computing the heavy input projections, SRP's forward-pass overhead is reduced to $\mathcal{O}(|\mathcal{V}| r_s (k + m + h))$. Since $r_s \ll h$ and $k, m \ll d_{in}$, this linear cost is asymptotically negligible compared to the backbone's heavy message-passing complexity of $\mathcal{O}(|\mathcal{E}|d_{in})$.

\textbf{Space Complexity.} SRP introduces a minimal set of trainable parameters. For a standard $n$-layer network, combining the initial dual-channel layer and $n-1$ projective fallback layers yields exactly $r_s(k + m + (2n-1)h) + nh + n$ trainable parameters. This scales strictly as $\mathcal{O}(nr_sh)$, avoiding input-dimension ($d_{in}$) dependencies, making it far smaller than full fine-tuning or LoRA-style baselines. Additionally, frozen buffers ($V, V_{\perp}^\top$) require only $\mathcal{O}(d_{in}(k+m) + nhk)$ memory, while activation memory remains $\mathcal{O}(|\mathcal{V}|d_{in})$, identical to the unmodified backbone.

\section{Experiments}
\label{sec:experiments}

\subsection{Experimental Setup}
\label{subsec:setup}

\paragraph{Datasets.}
We evaluate the effectiveness of SRP on ten benchmark datasets spanning both node- and graph-level classification tasks. 
For \textbf{node classification} (5-shot), we consider five widely used benchmarks, including the citation networks Cora~\citep{yang2016revisiting}, CiteSeer~\citep{yang2016revisiting}, and PubMed~\citep{yang2016revisiting}, as well as the large-scale datasets Flickr~\citep{zenggraphsaint} and ogbn-arxiv~\citep{hu2020ogb}. 
For \textbf{graph classification} (50-shot), we use five molecular and biological graph datasets from TUDataset~\citep{morris2020tudataset}: ENZYMES, DD, NCI1, NCI109, and Mutagenicity. 
These benchmarks exhibit substantial diversity in graph scale, topology, and feature dimensionality. Detailed dataset statistics are provided in Appendix~\ref{app:datasets}.

\paragraph{Pre-training strategies and Baselines.}
To evaluate the compatibility and generality of SRP across different self-supervised paradigms, we conduct experiments under two representative pre-training strategies: GraphCL~\citep{you2020graphcl} and SimGRACE~\citep{xia2022simgrace}. 
We compare SRP against ten representative baselines, including: 
Classifier Only (a linear classifier trained on frozen features), 
GPPT~\citep{sun2022gppt} (link-prediction-based prompting), 
GraphPrompt~\citep{liu2023graphprompt} (readout-level prompting), 
All-in-one~\citep{sun2023all} (multi-task prompting), 
GPF and GPF-plus~\citep{fang2023gpf} (feature-level prompting), 
EdgePrompt and EdgePrompt-plus~\citep{edgeprompt} (edge-level prompting), 
and GraphLoRA~\citep{GraphLoRA} (structure-aware low-rank adaptation). 
For fair comparison, all methods use the same backbone architecture, pre-trained checkpoints, and evaluation protocol following Prog~\citep{zi2024prog}.


\begin{table}[H]
\centering
\caption{Performance on node-level benchmarks (5-seed avg.). \textbf{Bold} = best, \underline{underline} = second best.}
\label{tab:main_results_node}
\resizebox{\textwidth}{!}{%
\begin{tabular}{llccccc}
\toprule
\textbf{Pre-training} & \textbf{Tuning Methods} & \textbf{Cora} & \textbf{CiteSeer} & \textbf{PubMed} & \textbf{ogbn-arxiv} & \textbf{Flickr}\\
\midrule
\multirow{11}{*}{GraphCL}
& Classifier Only & $53.05_{\pm 4.76}$ & $38.62_{\pm 3.43}$ & $64.28_{\pm 4.51}$ & $21.15_{\pm 1.64}$ & $24.32_{\pm 2.93}$ \\
& GPPT & $50.96_{\pm 6.67}$ & $39.50_{\pm 1.67}$ & $60.47_{\pm 4.75}$ & $17.99_{\pm 1.14}$ & $24.35_{\pm 1.84}$  \\
& GraphPrompt & $55.71_{\pm 4.62}$ & $40.81_{\pm 2.11}$ & $63.47_{\pm 2.23}$ & $21.03_{\pm 1.92}$ & \underline{$26.08_{\pm 3.44}$} \\
& ALL-in-one & $38.00_{\pm 4.17}$ & $40.27_{\pm 2.09}$ & $58.61_{\pm 3.49}$ & $16.42_{\pm 2.98}$ & $25.08_{\pm 3.44}$ \\
& GPF & $58.52_{\pm 4.07}$ & $43.55_{\pm 2.80}$ & $67.67_{\pm 3.14}$ & $21.73_{\pm 1.75}$ & $23.98_{\pm 1.71}$ \\
& GPF-plus & $52.24_{\pm 4.59}$ & $38.47_{\pm 3.27}$ & $64.30_{\pm 4.58}$ & $21.03_{\pm 1.96}$ & $25.32_{\pm 2.02}$ \\
& EdgePrompt & $58.60_{\pm 4.46}$ & $43.31_{\pm  3.23}$ & \underline{$67.76_{\pm 3.01}$} & $21.90_{\pm 1.71}$ & $24.83_{\pm 2.78}$ \\
& EdgePrompt+ & $62.88_{\pm 6.43}$ & \underline{$46.20_{\pm 0.99}$} & $67.41_{\pm 5.25}$ & $23.18_{\pm 1.26}$ & $25.57_{\pm 3.04}$ \\
& GraphLoRA & \underline{$63.35_{\pm 4.32}$} & $45.98_{\pm 3.97}$ & $67.21_{\pm 3.15}$ & \underline{$23.26_{\pm 1.89}$} & $25.71_{\pm 3.16}$ \\
& \textbf{SRP (Ours)} & $\bm{68.11_{\pm 2.29}}$ & $\bm{49.02_{\pm 2.26}}$ & $\bm{68.81_{\pm 3.92}}$ & $\bm{24.75_{\pm 1.79}}$ & $\bm{26.39_{\pm 2.11}}$ \\
\midrule
\multirow{11}{*}{SimGRACE}
& Classifier Only & $52.27_{\pm 2.74}$ & $40.45_{\pm 3.55}$ & $56.72_{\pm 3.80}$ & $20.75_{\pm 2.92}$ & $25.53_{\pm 3.98}$ \\
& GPPT & $52.07_{\pm 7.65}$ & $40.25_{\pm 3.29}$ & $58.65_{\pm 5.12}$ & $17.76_{\pm 1.80}$ & $23.37_{\pm 4.66}$ \\
& GraphPrompt & $51.42_{\pm 2.80}$ & $41.74_{\pm 2.22}$ & $55.98_{\pm 2.94}$ & $20.48_{\pm 2.57}$ & $25.88_{\pm 3.81}$ \\
& ALL-in-one & $34.64_{\pm 4.06}$ & $38.95_{\pm 2.35}$ & $54.18_{\pm 4.70}$ & $16.72_{\pm 2.90}$ & $27.68_{\pm 4.58}$ \\
& GPF & $58.23_{\pm 4.19}$ & $44.87_{\pm 4.35}$ & $61.55_{\pm 3.79}$ & $21.86_{\pm 2.91}$ & $26.51_{\pm 4.69}$ \\
& GPF-plus & $52.27_{\pm 3.34}$ & $41.02_{\pm 3.49}$ & $56.95_{\pm 3.86}$ & $21.44_{\pm 3.77}$ & $28.35_{\pm 5.50}$ \\
& EdgePrompt & $58.37_{\pm 4.51}$ & $43.94_{\pm 4.15}$ & $61.10_{\pm 3.69}$ & $21.85_{\pm 2.54}$ & $\bm{30.12_{\pm 5.04}}$ \\
& EdgePrompt+ & $62.40_{\pm 7.97}$ & \underline{$46.62_{\pm 2.53}$} & \underline{$64.91_{\pm 5.58}$} & $22.74_{\pm 2.34}$ & \underline{$28.50_{\pm 4.08}$} \\
& GraphLoRA & $\underline{63.06_{\pm 3.64}}$ & $46.39_{\pm 6.15}$ & $64.33_{\pm 5.23}$ & \underline{$23.22_{\pm 2.97}$} & $28.02_{\pm 3.54}$ \\
& \textbf{SRP (Ours)} & $\bm{67.24_{\pm 7.31}}$ & $\bm{51.37_{\pm 5.24}}$ & $\bm{67.32_{\pm 4.10}}$ & $\bm{24.01_{\pm 2.91}}$ & $28.48_{\pm 3.26}$ \\
\bottomrule
\end{tabular}%
}
\end{table}

\begin{table}[!ht]
\centering
\caption{Performance on graph-level benchmarks (5-seed avg.). \textbf{Bold} = best, \underline{underline} = second best.}
\label{tab:main_results_graph}
\resizebox{\textwidth}{!}{%
\begin{tabular}{llccccc}
\toprule
\textbf{Pre-training} & \textbf{Tuning Methods} & \textbf{ENZYMES} & \textbf{DD} & \textbf{NCI1} & \textbf{NCI109} & \textbf{Mutagenicity}\\
\midrule
\multirow{11}{*}{GraphCL}
& Classifier Only & $30.50_{\pm 1.16}$ & $62.89_{\pm 2.19}$ & $62.49_{\pm 1.95}$ & $61.68_{\pm 0.93}$ & $66.62_{\pm 1.87}$ \\
& GraphPrompt & $27.83_{\pm 1.61}$ & $64.33_{\pm 1.79}$ & $63.19_{\pm 1.71}$ & $62.18_{\pm 0.48}$ & \underline{$ 67.62_{\pm 0.65}$}\\
& ALL-in-one & $25.92_{\pm 0.55}$ & $66.54_{\pm 1.82}$ & $57.52_{\pm 2.61}$ & $62.74_{\pm 0.78}$ & $ 63.43_{\pm 2.53}$\\
& GPF & $30.08_{\pm 1.25}$ & $64.54_{\pm 2.22}$ & $62.66_{\pm 1.83}$ & $62.29_{\pm 0.90}$ & $ 66.54_{\pm 1.85}$\\
& GPF-plus & $31.00_{\pm 1.50}$ & $67.26_{\pm 2.29}$ & $64.56_{\pm 1.10}$ & $62.84_{\pm 0.22}$ & $ 66.82_{\pm 1.63}$\\
& EdgePrompt & $29.50_{\pm 1.57}$ & $64.16_{\pm 2.13}$ & $63.05_{\pm 2.11}$ & $62.59_{\pm 0.93}$ & $ 66.87_{\pm 1.88}$\\
& EdgePrompt+ & $34.00_{\pm 1.25}$ & $67.98_{\pm 2.05}$ & $66.30_{\pm 2.54}$ & \underline{$66.52_{\pm 0.91}$} & $ 67.47_{\pm 2.37}$\\
& GraphLoRA & \underline{$36.05_{\pm 1.89}$} & \underline{$68.33_{\pm 1.93}$} & \underline{$66.54_{\pm 2.71}$} & $66.40_{\pm 0.96}$ & $ 67.12_{\pm 2.11}$\\
& \textbf{SRP (Ours)} & $\bm{36.32_{\pm 2.65}}$ & $\bm{69.71_{\pm 1.72}}$ & $\bm{66.93_{\pm 1.81}}$ & $\bm{68.18_{\pm 0.55}}$ & $ \bm{69.35_{\pm 1.80}}$\\
\midrule
\multirow{11}{*}{SimGRACE}
& Classifier Only & $27.07_{\pm 1.04}$ & $61.77_{\pm 2.40}$ & $61.27_{\pm 3.64}$ & $62.12_{\pm 1.10}$ & $ 67.36_{\pm 0.71}$\\
& GraphPrompt & $26.87_{\pm 1.47}$ & $62.58_{\pm 1.84}$ & $62.45_{\pm 1.52}$ & $62.41_{\pm 0.69}$ & $ 68.03_{\pm 0.78}$\\
& ALL-in-one & $25.73_{\pm 1.18}$ & $65.16_{\pm 1.47}$ & $58.52_{\pm 1.59}$ & $62.01_{\pm 0.66}$ & $ 64.43_{\pm 1.00}$\\
& GPF & $28.53_{\pm 1.76}$ & $65.64_{\pm 0.70}$ & $61.45_{\pm 3.13}$ & $61.90_{\pm 1.26}$ & $ 67.19_{\pm 0.74}$\\
& GPF-plus & $27.33_{\pm 2.01}$ & $67.20_{\pm 1.56}$ & $61.61_{\pm 2.89}$ & $62.84_{\pm 0.23}$ & $ 67.69_{\pm 0.64}$\\
& EdgePrompt & $29.33_{\pm 2.30}$ & $63.97_{\pm 2.14}$ & $62.02_{\pm 3.02}$ & $62.02_{\pm 1.03}$ & $ 67.55_{\pm 0.85}$\\
& EdgePrompt+ & $32.67_{\pm 2.53}$ & \underline{$67.72_{\pm 1.62}$} & $\bm{67.07_{\pm 1.96}}$ & \underline{$66.53_{\pm 1.30}$} & \underline{$68.31_{\pm 1.36}$}\\
& GraphLoRA & \underline{$35.17_{\pm 1.42}$} & $67.02_{\pm 1.97}$ & \underline{$66.83_{\pm 2.39}$} & $65.33_{\pm 1.04}$ & $ 67.58_{\pm 1.67}$\\
& \textbf{SRP (Ours)} & $\bm{35.38_{\pm 1.88}}$ & $\bm{68.12_{\pm 1.97}}$ & $66.53_{\pm 2.94}$ & $\bm{67.07_{\pm 0.83}}$ & $\bm{68.92_{\pm 2.73}}$\\
\bottomrule
\end{tabular}%
}
\end{table}

\subsection{Main Results}
We compare the node-level and graph-level classification accuracy  of our methods and other baselines. Table~\ref{tab:main_results_node} and Table~\ref{tab:main_results_graph} report results under GraphCL and SimGRACE pre-training.
We make several key observations.
First, SRP consistently achieves the best or second best performance across almost all dataset.
Second, the gains are most pronounced on node classification benchmarks with high input dimension, where the null-space channel can extract substantial discriminative signal from the frozen encoder.
Third, on graph classification benchmarks where the input dimension is small, SRP operates via its projective fallback (Section~\ref{subsec:fusion}), yet remains superior to the strongest baselines, confirming that the spectral reversal mask provides meaningful gains.
These results validate the central thesis of this work: inverting the spectral priority of frozen GNN weights is a broadly effective strategy for graph prompt tuning.

\subsection{Ablation Study}
\label{subsec:ablation}
To disentangle the contributions of different components in SRP, we evaluate four ablation variants under both GraphCL and SimGRACE pre-training settings:
\textbf{SRP-NM} removes the null-space channel,
\textbf{SRP-NS} removes the spectral channel,
\textbf{SRP-NR} disables spectral reversal by adopting uniform weighting across all singular directions (i.e., $\mu_j \equiv 1$),
and \textbf{SRP-Bi} replaces the reverse weighting strategy with a bidirectional mask that amplifies both dominant and trailing spectral directions. 
The results in Table~\ref{tab:ablation} reveal a consistent trend across both pre-training frameworks.

\textbf{Effectiveness of the spectral and null-space channels.}
Both the spectral channel and the null-space channel play essential roles in extracting discriminative information for downstream adaptation. 
Removing either component leads to clear performance degradation. 
In particular, \textbf{SRP-NM} and \textbf{SRP-NS} consistently underperform the full SRP model across all datasets and pre-training settings, demonstrating that both weak spectral directions and null-space information contribute substantially to effective prompt learning.

\textbf{Effectiveness of the spectral reversal principle.}
SRP improves adaptation by reversing the pretrained spectral bias and emphasizing weak singular directions. 
When this mechanism is removed (\textbf{SRP-NR}) or replaced with bidirectional amplification (\textbf{SRP-Bi}), performance consistently declines relative to the full SRP model. 
These results suggest that selectively emphasizing trailing singular directions is more effective than uniformly weighting or jointly amplifying both spectral extremes for few-shot graph adaptation.

\begin{table*}[t]
    \centering
    \caption{Ablation Study of Different Variants on GraphCL and SimGRACE.}
    \label{tab:ablation}
    \renewcommand{\arraystretch}{1.2}
    \small
    \resizebox{0.85\textwidth}{!}{%
    \begin{tabular}{lcccccc}
        \toprule
        \multirow{2}{*}{\textbf{Variant}} & \multicolumn{3}{c}{\textbf{GraphCL}} & \multicolumn{3}{c}{\textbf{SimGRACE}} \\
        \cmidrule(lr){2-4} \cmidrule(lr){5-7}
        & \textbf{Cora} & \textbf{CiteSeer} & \textbf{Avg. $\Delta$} & \textbf{Cora} & \textbf{CiteSeer} & \textbf{Avg. $\Delta$} \\
        \midrule
        \textbf{SRP} (full)              & $\bm{68.1_ {\pm 2.3}}$ & $\bm{49.0_ {\pm 2.3}}$ & $-0.0$ & $\bm{67.2_{\pm 7.3}}$ & $\bm{51.4_{\pm 5.2}}$ & $-0.0$ \\
        SRP-NM (w/o null channel)        & $63.4_ {\pm 2.8}$ & $41.1_{\pm 3.8}$ & $-6.3$ & $60.3_ {\pm 5.9}$ & $43.5_ {\pm 2.9}$ & $-7.4$ \\
        SRP-NS (w/o spectral channel)        & $64.4_ {\pm 2.1}$ & $45.6_ {\pm 3.2}$ & $-3.6$ & $63.5_ {\pm 6.6}$ & $47.7_ {\pm 4.3}$ & $-3.7$ \\
        SRP-NR (w/o reverse)             & $64.7_ {\pm 2.7}$ & $48.0 _{\pm 4.0}$ & $-2.2$ & $63.4_ {\pm 4.4}$ & $49.9_ {\pm 4.9}$ & $-2.7$ \\
        SRP-Bi (bidirectional)           & $64.7_ {\pm 3.3}$ & $47.9 _{\pm 3.3}$ & $-2.3$ & $63.0 _{\pm 4.5}$ & $50.0_ {\pm 4.8}$ & $-2.8$ \\
        \bottomrule
    \end{tabular}%
    }
\end{table*}



\FloatBarrier 

\subsection{Additional Evaluation}
\label{subsec:analysis}
\textbf{Label sensitivity}
We analyzed the impact of the number of labels for each class during prompt tuning in downstream tasks, as shown in Figure \ref{fig:fewshot_combined}. We found that the performance of node classification tasks fluctuates significantly with changes in the number of samples when the shot number is extremely few. In contrast, graph classification tasks are less affected by the number of labels.\\

\textbf{Parameter sensitivity.}
We study the sensitivity of SRP to its two key hyperparameters: the null-space PCA dimension $m$ and the adaptor bottleneck rank $r_s$ and numbers of labels. $m$ controls how many low-amplification singular directions are used to construct the spectral reparameterization subspace, $r_s$ determines the capacity of the lightweight adapter, shot number reflects the ability to fine tune the model with a limited number of labels.
Figure~\ref{fig:fewshot_combined} reports 5-shot node classification accuracy under GraphCL as we vary $r_s \in \{16, 32, 64, 128\}$ and $m \in \{4, 8, 16, 32\}$ independently.
Performance remains stable across a wide range of values, with accuracy plateauing at the default settings ($r_s{=}32$, $m{=}16$), confirming that SRP is robust to the choice of these hyperparameters and does not require extensive tuning in practice.  \\

\textbf{Convergence Speed.} 
We also investigate the convergence speed of our method compared with other strong baselines. Figure~\ref{fig:convergence} illustrates the accuracy curves of our method and the baselines under two pre-training strategies. According to Figure~\ref{fig:convergence}, we can observe that SRP can generally converge faster than other methods, which means extra efficiency by adopting fewer epoch numbers in tuning the whole pre-trained models.
\begin{figure}[h]
  \centering
  \includegraphics[width=\textwidth]{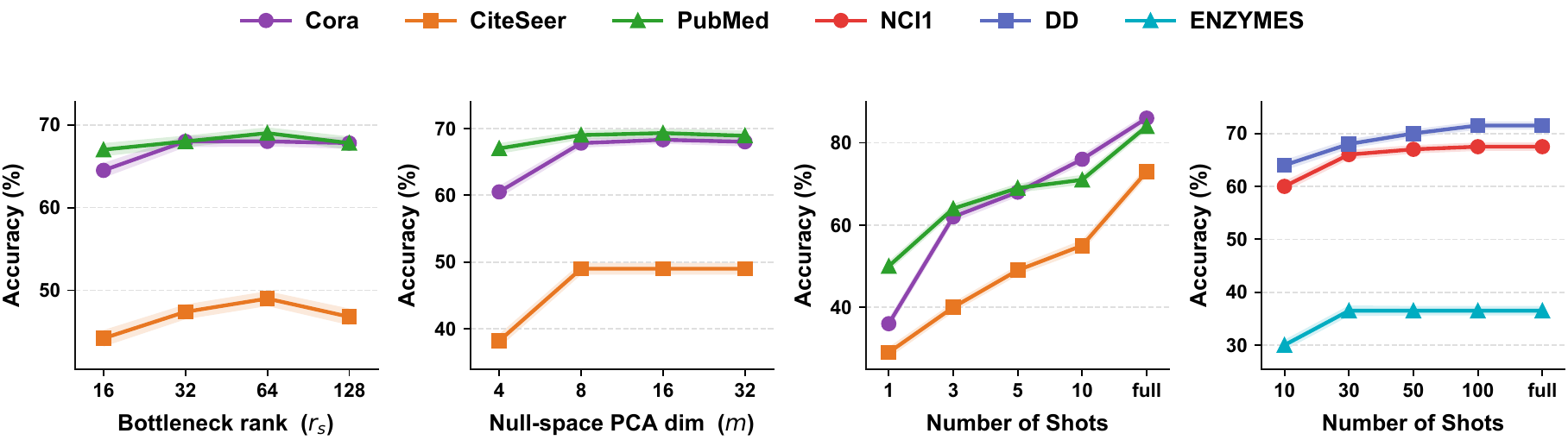}
  \caption{Label Sensitivity and Parameter Sensitivity of NS-PCA and bottleneck rank.}
  \label{fig:fewshot_combined}
\end{figure}

\begin{figure}[h]
  \centering
  \includegraphics[width=\textwidth]{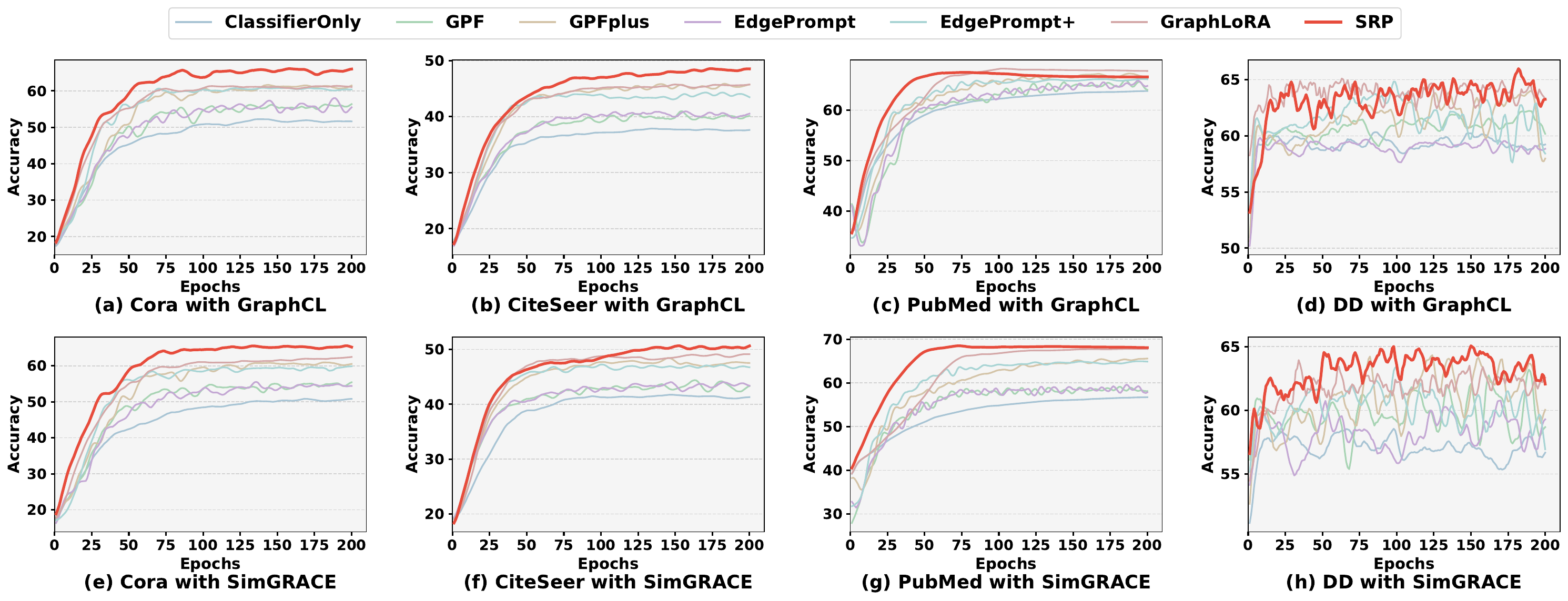}
  \caption{Convergence speed of different methods.}
  \label{fig:convergence}
\end{figure}


\section{Conclusion}
\label{sec:conclusion}

In this work, we identified \emph{spectral bias} as a key bottleneck in adapting pre-trained GNNs, where self-supervised objectives overemphasize dominant singular directions while underutilizing lower-energy components. Using the DPMS, we empirically and theoretically showed that downstream utility is often concentrated in these underexplored spectral directions. To address this misalignment, we introduced SRP, a parameter-efficient framework that rebalances spectral contributions of frozen encoders via soft-thresholding and null-space augmentation. By reallocating capacity toward underutilized components, SRP achieves state-of-the-art performance across diverse benchmarks while using less than 7\% of backbone parameters. Our results highlight a simple but important principle: in parameter-efficient adaptation, \emph{which directions are adapted} can matter more than how many parameters are introduced.

\textbf{Limitations and Future Works.}
While SRP remains robustness across different graph homophily levels and backbone architectures according to our additional results~\ref{sec:additional_analysis}, its spectral reweighting currently relies on a shared singular basis and a global threshold, which may not fully capture region-specific frequency heterogeneity. In addition, the effectiveness of the null-space channel depends on the relationship between input dimensionality and encoder rank, and can be limited when the frozen transformation has little or no null space. Future work includes instance- or structure-adaptive spectral reweighting driven by downstream feedback, as well as extending the spectral perspective to multi-modal graph representation learning.


\begin{ack}
 This work was supported by the National Natural Science
Foundation of China (No. 62276053); Center for HPC, University of Electronic Science and Technology of China; Sichuan Province Science and Technology Support Program (No. 2025ZDZX0016).
\end{ack}


\bibliographystyle{unsrt}
\bibliography{ref}

\newpage
\clearpage 
\appendix

\section{Dataset Statistics}
\label{app:datasets}

Tables~\ref{tab:dataset_node} and~\ref{tab:dataset_graph} summarize the basic statistics of all ten benchmark datasets used in our main experiments and three benchmark in our extension to heterophilic graphs~\cite{Pei2020Geom-GCN:}.

\begin{table}[h]
  \caption{Node-level benchmark datasets. \#Feat.\ is the raw input feature dimension; \#Classes is the number of downstream classes.}
  \label{tab:dataset_node}
  \centering\small
  \begin{tabular}{lrrrrr}
    \toprule
    \textbf{Dataset} & \textbf{\#Nodes} & \textbf{\#Edges} & \textbf{\#Feat.} & \textbf{\#Classes} & \textbf{Task} \\
    \midrule
    Cora       & 2,708   & 10,556    & 1,433 & 7  & Node \\
    CiteSeer   & 3,327   & 9,104     & 3,703 & 6  & Node \\
    PubMed     & 19,717  & 88,648    & 500   & 3  & Node \\
    ogbn-arxiv & 169,343 & 1,166,243 & 128   & 40 & Node \\
    Flickr     & 89,250  & 899,756   & 500   & 7  & Node \\
    Cornell    & 183     & 298       & 1703  & 5
    & Node \\
    Squirrel    & 5201     & 217073   & 2089  & 5
    & Node \\
    Chameleon   & 2277   & 36101    & 2277  & 5
    & Node \\
    \bottomrule
  \end{tabular}
\end{table}

\begin{table}[h]
  \caption{Graph-level benchmark datasets (TUDataset). Avg.\ Nodes and Avg.\ Edges are computed over the graph collection.}
  \label{tab:dataset_graph}
  \centering\small
  \begin{tabular}{lrrrrrr}
    \toprule
    \textbf{Dataset} & \textbf{\#Graphs} & \textbf{Avg.\ Nodes} & \textbf{Avg.\ Edges} & \textbf{\#Feat.} & \textbf{\#Classes} & \textbf{Task} \\
    \midrule
    ENZYMES      & 600   & 32.63  & 124.27   & 3  & 6 & Graph \\
    DD           & 1,178 & 284.32 & 1,431.32 & 89 & 2 & Graph \\
    NCI1         & 4,110 & 29.87  & 64.60    & 37 & 2 & Graph \\
    NCI109       & 4,127 & 29.68  & 64.26    & 38 & 2 & Graph \\
    Mutagenicity & 4,337 & 30.32  & 61.54    & 14 & 2 & Graph \\ 
    \bottomrule
  \end{tabular}
\end{table}

Our next objective is to investigate how variations in curvature~\cite{wang2026postgcndecaderevisited}, an important structural property in graph representation learning, may affect the performance of our method.

\section{Proofs of Theoretical Results}
\label{sec:app_proofs}

In this section, we provide the rigorous mathematical proofs for the theoretical claims presented in Section \ref{sec:spectral_bias}.

\subsection{Proof of Proposition \ref{prop:spectral_align} (Spectral Concentration)}

\textbf{Statement:} Given the rank-1 update $\Delta W^{(t)} = -\eta_t\, g^{(t)}(h^{(t)})^\top$ and the spectral alignment condition $\mathbb{E}_t\big[\|(I - P_r)h^{(t)}\|^2 / \|h^{(t)}\|^2\big] \leq \delta^2$, the cumulative gradient update satisfies:
\begin{equation*}
  \frac{\mathbb{E}\big[\|P_r\,\Delta W^\top\|_F^2\big]}{\mathbb{E}\big[\|\Delta W\|_F^2\big]} \;\geq\; 1 - \delta^2.
\end{equation*}

\textit{Proof.} 
Consider the transpose of the gradient update at a single step $t$:
\begin{equation}
    (\Delta W^{(t)})^\top = -\eta_t\, h^{(t)}(g^{(t)})^\top.
\end{equation}
We can decompose the input vector $h^{(t)}$ into two orthogonal components using the projection matrix $P_r$ and its orthogonal complement $(I - P_r)$:
\begin{equation}
    h^{(t)} = P_r h^{(t)} + (I - P_r) h^{(t)}.
\end{equation}
The Frobenius norm squared of the update transposed is determined by the outer product of these vectors. Utilizing the property $\|a b^\top\|_F^2 = \|a\|_2^2 \|b\|_2^2$, we have:
\begin{equation}
    \|(\Delta W^{(t)})^\top\|_F^2 = \eta_t^2 \|h^{(t)}\|_2^2 \|g^{(t)}\|_2^2.
\end{equation}
Similarly, the energy of the update residing outside the top-$r$ principal subspace is:
\begin{equation}
    \|(I - P_r)(\Delta W^{(t)})^\top\|_F^2 = \eta_t^2 \|(I - P_r)h^{(t)}\|_2^2 \|g^{(t)}\|_2^2.
\end{equation}
Assuming that the magnitude of the upstream gradient $\|g^{(t)}\|_2^2$ is independent of the normalized directional projection of $h^{(t)}$ over the training trajectory, we can take the expectation over $t$:
\begin{equation}
    \mathbb{E}\big[\|(I - P_r)\Delta W^\top\|_F^2\big] = \mathbb{E}\left[ \frac{\|(I - P_r)h^{(t)}\|_2^2}{\|h^{(t)}\|_2^2} \cdot \eta_t^2 \|h^{(t)}\|_2^2 \|g^{(t)}\|_2^2 \right].
\end{equation}
Given the spectral alignment condition $\mathbb{E}_t\big[\|(I - P_r)h^{(t)}\|^2 / \|h^{(t)}\|^2\big] \leq \delta^2$, it directly follows that:
\begin{equation}
    \mathbb{E}\big[\|(I - P_r)\Delta W^\top\|_F^2\big] \leq \delta^2\, \mathbb{E}\big[\|\Delta W^\top\|_F^2\big] = \delta^2\, \mathbb{E}\big[\|\Delta W\|_F^2\big].
\end{equation}
By the Pythagorean theorem for the Frobenius norm, $\|P_r\Delta W^\top\|_F^2 + \|(I - P_r)\Delta W^\top\|_F^2 = \|\Delta W\|_F^2$. Substituting the upper bound yields the final concentration result:
\begin{equation}
    \mathbb{E}\big[\|P_r\,\Delta W^\top\|_F^2\big] \geq (1 - \delta^2)\, \mathbb{E}\big[\|\Delta W\|_F^2\big],
\end{equation}
which concludes the proof. 

\vspace{1em}

\subsection{Proof of Theorem \ref{thm:reversal} (Reachability and Approximation Limits)}

\textbf{Statement:} For a frozen encoder $W = U\Sigma V^\top$ and an optimal downstream classifier $W_\text{down}$, the minimum approximation error using any linear head $A$ is $\varepsilon^2(W) = \|P_\mathcal{N} W_\text{down}\|_F^2$, achieved at $A^* = U\Sigma^{-1}V^\top W_\text{down}$.

\textit{Proof.}
The objective is to minimize the approximation error $\mathcal{L}(A) = \|W_\text{down} - W^\top A\|_F^2$ over all possible downstream projection heads $A \in \mathbb{R}^{d_\text{out} \times C}$.
Substituting the compact SVD of the transposed encoder $W^\top = V\Sigma U^\top$ into the objective yields:
\begin{equation}
    \mathcal{L}(A) = \|W_\text{down} - V\Sigma U^\top A\|_F^2.
\end{equation}
Since the columns of $V$ are orthonormal, $VV^\top$ forms the orthogonal projection onto $\text{col}(V)$, and $P_\mathcal{N} = I - VV^\top$ is the projection onto the null space $\ker(W)$. We can orthogonally decompose the target matrix $W_\text{down}$ as:
\begin{equation}
    W_\text{down} = VV^\top W_\text{down} + P_\mathcal{N} W_\text{down}.
\end{equation}
Because the term $V\Sigma U^\top A$ lies entirely within the column space $\text{col}(V)$, it is orthogonal to $P_\mathcal{N} W_\text{down}$. By applying the Pythagorean theorem for the Frobenius norm, we can decouple the objective function:
\begin{equation}
    \mathcal{L}(A) = \|VV^\top W_\text{down} - V\Sigma U^\top A\|_F^2 + \|P_\mathcal{N} W_\text{down}\|_F^2.
\end{equation}
The second term, $\|P_\mathcal{N} W_\text{down}\|_F^2$, is entirely independent of $A$ and represents the irreducible error (the fundamental blind spot). To minimize $\mathcal{L}(A)$, we must force the first term to zero:
\begin{equation}
    VV^\top W_\text{down} = V\Sigma U^\top A.
\end{equation}
Left-multiplying both sides by $V^\top$ (noting that $V^\top V = I$) gives:
\begin{equation}
    V^\top W_\text{down} = \Sigma U^\top A.
\end{equation}
Since $\Sigma$ is a diagonal matrix of strictly positive singular values, it is invertible. We can explicitly solve for $A$ by left-multiplying by $U\Sigma^{-1}$:
\begin{equation}
    A^* = U\Sigma^{-1}V^\top W_\text{down}.
\end{equation}
Substituting $A^*$ back into $\mathcal{L}(A)$ eliminates the first term, formally proving that the minimum reachable error is exactly:
\begin{equation}
    \varepsilon^2(W) = \min_{A} \mathcal{L}(A) = \|P_\mathcal{N} W_\text{down}\|_F^2.
\end{equation}
This establishes both the optimal projection head structure and the irreducible nature of downstream targets residing in the null space.

\subsection{Null-Space PCA Optimality (Proof of Theorem~\ref{thm:nspca})}

\begin{theorem}[Optimality of Null-Space PCA]
\label{thm:nspca}
Let $\Sigma_X = \EE[xx^\top]$ be the input feature covariance and $P_\calN = I - VV^\top$ the orthogonal projector onto $\ker(W)$.
Among all orthonormal matrices $N_o \in \RR^{d_\text{in} \times m}$ satisfying $WN_o = 0$, the null-space PCA basis uniquely solves:
\begin{equation}
N_o^*
=
\operatorname*{arg\,max}_{N_o}
\left\{
    \operatorname{tr}\!\left(N_o^\top \Sigma_X N_o\right)
    \;\middle|\;
    N_o^\top N_o = I_m,\;
    WN_o = 0
\right\}.
\label{eq:nspca_opt}
\end{equation}
Standard PCA (unconstrained) is generically infeasible for~\eqref{eq:nspca_opt} and therefore suboptimal.
\end{theorem}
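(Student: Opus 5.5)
Our plan is to reduce the constrained problem to an ordinary Ky Fan trace maximization on the null space. First, the feasible set needs to be parametrized. The constraint $WN_o = 0$ says exactly that every column of $N_o$ lies in $\ker(W)=\operatorname{span}(V)^\perp$. Equivalently, $V^\top N_o = 0$, since $W = U\Sigma V^\top$ with $U\Sigma$ injective. Let $B \in \RR^{d_\text{in}\times(d_\text{in}-k)}$ be any orthonormal basis of $\ker(W)$, so that $P_\calN = BB^\top$. Every feasible $N_o$ can then be written uniquely as $N_o = BC$, with $C \in \RR^{(d_\text{in}-k)\times m}$ and $C^\top C = I_m$; conversely, every such $C$ gives a feasible $N_o$. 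This requires $m \le d_\text{in}-k$, which is exactly the regime in which the null-space channel is used (Section~\ref{subsec:null_channel}).

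Next we substitute into the objective. We get $\operatorname{tr}(N_o^\top \Sigma_X N_o) = \operatorname{tr}(C^\top (B^\top \Sigma_X B) C)$, where $B^\top\Sigma_X B$ is the compressed covariance of the null-space-projected features $P_\calN x$. Its nonzero spectrum coincides with that of $P_\calN \Sigma_X P_\calN = \EE[(P_\calN x)(P_\calN x)^\top]$, which is the population version of the covariance of $\bfX_\perp = \bfX(I-VV^\top)$ in Eq.~\eqref{eq:null_channel}. By the Ky Fan maximum principle, the maximum over orthonormal $C$ equals the sum of the top $m$ eigenvalues of $B^\top\Sigma_X B$. It is attained when $\operatorname{col}(C)$ is the corresponding top-$m$ eigenspace. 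Mapping back through $B$, the maximizers are the orthonormal bases of the span of the top-$m$ eigenvectors of $P_\calN\Sigma_X P_\calN$. These eigenvectors automatically lie in $\ker(W)$ whenever their eigenvalues are positive. This span is precisely the null-space PCA basis $V_\perp^\top$.

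The main obstacle is the word \emph{uniquely}. Ky Fan only gives uniqueness of the optimal subspace, not of the matrix, because $N_o \mapsto N_oR$ for orthogonal $R\in\RR^{m\times m}$ preserves both the objective and the constraints. Even the subspace is unique only under an eigengap $\lambda_m > \lambda_{m+1}$ of $P_\calN\Sigma_X P_\calN$. We will therefore state and prove uniqueness up to right orthogonal transformation, assuming the eigengap; without the gap, uniqueness genuinely fails. For the subspace uniqueness we will use the equality case of Ky Fan. Writing $C^\top M C$ in the eigenbasis of $M$, the trace is $\sum_i \lambda_i w_i$ with weights $w_i\in[0,1]$ and $\sum_i w_i = m$. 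Equality with $\sum_{i\le m}\lambda_i$ under a strict gap forces $w_i = 1$ for $i\le m$ and $w_i=0$ otherwise.

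Finally, for the claim about standard PCA, let $E_m$ be the top-$m$ eigenvectors of $\Sigma_X$. Standard PCA is feasible only if $V^\top E_m = 0$, i.e.\ only if the leading principal subspace of the data is orthogonal to the row space of $W$. This is a closed condition of positive codimension, so it fails generically, for example for Lebesgue-almost every $V$ or $\Sigma_X$. Moreover, Proposition~\ref{prop:spectral_align} suggests that pre-training aligns $\operatorname{col}(V)$ with exactly the top principal directions of the inputs, which makes the violation typical rather than exceptional. Since standard PCA is infeasible, it cannot solve~\eqref{eq:nspca_opt}. Projecting $E_m$ onto $\ker(W)$ and re-orthonormalizing yields a feasible point whose objective is at most the optimum, so it is suboptimal.
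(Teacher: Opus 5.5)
Your proposal is correct and follows the same route as the paper: rewrite $WN_o=0$ as $V^\top N_o=0$, parametrize the feasible set through $\ker(W)$, apply Ky Fan to the null-space-compressed covariance, and rule out standard PCA as generically infeasible. The one difference in parametrization is that you use an orthonormal basis $B$ of $\ker(W)$, so the constraint becomes $C^\top C=I_m$, which is exactly Ky Fan's hypothesis; the paper instead writes $N_o=P_\calN C$ with the less convenient constraint $C^\top P_\calN C=I_m$. Your added qualifications are corrections the paper's appeal to ``unique'' attainment glosses over, not gaps in your argument: you require $m\le d_\text{in}-k$, and you establish uniqueness only up to $N_o\mapsto N_oR$ with $R$ orthogonal and only under an eigengap $\lambda_m>\lambda_{m+1}$ of $P_\calN\Sigma_X P_\calN$.
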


\begin{proof}
The constraint $WN_o = 0$ is equivalent to $V^\top N_o = 0$ (since $\ker(W) = \ker(V^\top)$).  Writing $N_o = P_\calN C$ for an arbitrary matrix $C$, the constraint $V^\top N_o = 0$ is automatically satisfied, and~\eqref{eq:nspca_opt} reduces to:
\[
  \max_{\substack{C^\top P_\calN C = I_m}} \mathrm{tr}(C^\top P_\calN \Sigma_X P_\calN C).
\]
By the Ky Fan maximum theorem~\citep{horn2013matrix}, the maximum equals $\sum_{j=1}^m \lambda_j(P_\calN \Sigma_X P_\calN)$, attained uniquely by the top-$m$ eigenvectors of $P_\calN \Sigma_X P_\calN$---the null-space PCA basis.  An unconstrained PCA solution lies generically outside $\ker(W)$ (a measure-zero coincidence would be needed for it to satisfy $V^\top N_o = 0$), so it is infeasible and thus suboptimal.
\end{proof}

\subsection{PAC-Bayes Generalization Bound (Proof of Theorem~\ref{thm:pac})}

\begin{theorem}[PAC-Bayes Generalization Bound]
\label{thm:pac}
Let $\Theta$ denote SRP's learnable parameters with $d_\Theta = |\calV_w| r_s + m r_s + r_s d_\text{out} + d_\text{out} + 1$.
Let $\hat{R}(\Theta)$ be the empirical risk on $n$ labeled examples, $R(\Theta)$ the expected risk, and $\delta \in (0,1)$.
For any prior $\pi$ and posterior $\rho$ concentrated at $\hat\Theta$, with probability $\geq 1-\delta$:
\begin{equation}
  R(\hat\Theta) \;\leq\; \hat{R}(\hat\Theta) + \sqrt{\frac{\mathrm{KL}(\rho \| \pi) + \log(2\sqrt{n}/\delta)}{2n}}.
  \label{eq:pac_general}
\end{equation}
Setting $\pi = \calN(0, \kappa^2 I_{d_\Theta})$ and $\rho = \calN(\hat\Theta, \hat\sigma^2 I_{d_\Theta})$ with $\kappa = \kappa(W) := \sigma_\text{max}/\sigma_\text{min}$, one obtains:
\begin{equation}
  R(\hat\Theta) \;\leq\; \hat{R}(\hat\Theta) + \sqrt{\frac{|\calV_w| r_s \log\kappa(W) + \log(2n/\delta)}{2n}},
  \label{eq:pac_srp}
\end{equation}
where $d_\Theta = O(|\calV_w| r_s)$ is the dominant parameter count (spectral adaptor $A \in \RR^{|\calV_w| \times r_s}$).
\end{theorem}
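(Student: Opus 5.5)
The plan is to derive the bound from the standard McAllester/Maurer PAC-Bayes inequality and then evaluate the KL term for the specified Gaussian prior and posterior.

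\textbf{Step 1: the general bound.} For a loss bounded in $[0,1]$, Maurer's refinement of McAllester's bound states that, with probability $\geq 1-\delta$ over the $n$ samples, simultaneously for all posteriors $\rho$,
\[
\mathbb{E}_{\rho}R \;\leq\; \mathbb{E}_{\rho}\hat R + \sqrt{\frac{\mathrm{KL}(\rho\|\pi)+\log(2\sqrt n/\delta)}{2n}} .
\]
I will take this as given. To pass from $\mathbb{E}_\rho$ to the point $\hat\Theta$, I will either read $R(\hat\Theta)$ as the Gibbs risk of the posterior concentrated at $\hat\Theta$, as the statement says, or assume the risk is Lipschitz in $\Theta$ in a neighbourhood of $\hat\Theta$ and absorb the perturbation into the constant. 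This gives \eqref{eq:pac_general}.

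\textbf{Step 2: the KL term.} For isotropic Gaussians,
\[
\mathrm{KL}\big(\mathcal{N}(\hat\Theta,\hat\sigma^2 I)\,\|\,\mathcal{N}(0,\kappa^2 I)\big)=\tfrac12\Big[d_\Theta\tfrac{\hat\sigma^2}{\kappa^2}+\tfrac{\|\hat\Theta\|^2}{\kappa^2}-d_\Theta+d_\Theta\log\tfrac{\kappa^2}{\hat\sigma^2}\Big].
\]
To control $\|\hat\Theta\|^2$ I will use Theorem~\ref{thm:reversal}. On the reachable subspace the optimal head is $A^*=U\Sigma^{-1}V^\top W_\text{down}$, so each adapted weak direction has coefficient of order $\sigma_j^{-1}\|W_\text{down}^\top v_j\|$. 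After normalising features so that the targets and $\sigma_\text{max}$ are $O(1)$, each coordinate is at most of order $\sigma_\text{max}/\sigma_\text{min}=\kappa$. This gives $\|\hat\Theta\|^2\lesssim d_\Theta\kappa^2$, hence $\|\hat\Theta\|^2/\kappa^2=O(d_\Theta)$.

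Next I will choose $\hat\sigma^2$ to minimise the bracket. Since $\hat\sigma$ must be fixed before seeing the data, I will fix it on a geometric grid and pay a union-bound cost of $O(\log n)$, which merges into $\log(2n/\delta)$. A natural choice is $\hat\sigma=\Theta(1)$, i.e.\ posterior noise at the scale of the weakest direction after normalisation. With that choice the bracket becomes $O(d_\Theta\log\kappa^2)=O(d_\Theta\log\kappa)$.

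\textbf{Step 3: assemble.} Substitute the KL estimate into \eqref{eq:pac_general}. Note $d_\Theta=|\mathcal{V}_w|r_s+mr_s+r_sd_\text{out}+d_\text{out}+1$, and keep only the dominant spectral-adaptor term $|\mathcal{V}_w|r_s$ as the statement instructs. This yields \eqref{eq:pac_srp} up to absolute constants.

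\textbf{Main obstacle.} The hard step is the norm bound $\|\hat\Theta\|^2=O(d_\Theta\kappa^2)$ together with keeping the final term at $\log\kappa$ with no additive $d_\Theta$ constant. The learned $\hat\Theta$ is not literally $A^*$, and the $-d_\Theta+d_\Theta\hat\sigma^2/\kappa^2$ terms only vanish asymptotically. I would first try to enforce the norm bound explicitly: restrict the hypothesis class to a ball of radius $\sqrt{d_\Theta}\,\kappa$, which is justified by the reachability result, and then accept that \eqref{eq:pac_srp} holds only up to the $\mathcal{O}(\cdot)$ constants of Theorem~\ref{thm:pac_main}, assuming $\kappa\geq e$ so that $\log\kappa$ dominates the constant terms.
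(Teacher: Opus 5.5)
This is essentially the paper's own argument: McAllester's bound, the isotropic-Gaussian KL, a norm estimate $\|\hat\Theta\|^2=O(d_\Theta/\sigma_{\min}^2)$ (your $O(d_\Theta\kappa^2)$ after normalising $\sigma_{\max}$) justified by the $\sigma_j^{-1}$ scaling from Theorem~\ref{thm:reversal}, a choice of $\hat\sigma^2$ giving $\mathrm{KL}=O(d_\Theta\log\kappa)$, and keeping only the $|\mathcal{V}_w|r_s$ term. Like the paper, it gives \eqref{eq:pac_srp} only up to constants and for the Gibbs rather than the point risk, caveats you state more openly than the paper does; one small correction is that no grid or union bound over $\hat\sigma$ is needed, because the PAC-Bayes inequality holds simultaneously for all data-dependent posteriors and only the prior $\mathcal{N}(0,\kappa^2 I)$ must be fixed in advance.
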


\begin{proof}
Equation~\eqref{eq:pac_general} is McAllester's PAC-Bayes bound~\citep{mcallester1999some}.
For the chosen Gaussian prior and posterior, the KL divergence evaluates to:
\begin{equation}
  \mathrm{KL}(\rho\|\pi)
  = \frac{1}{2}\!\left[d_\Theta\log\frac{\kappa^2}{\hat\sigma^2}
    - d_\Theta
    + \frac{d_\Theta\hat\sigma^2 + \|\hat\Theta\|^2}{\kappa^2}\right].
  \label{eq:kl_bound}
\end{equation}
The effective signal scale per direction satisfies $\mu_j/\sigma_j \leq \mu_j/\sigma_\text{min} \leq 1/\sigma_\text{min}$, so $\|\hat\Theta\|^2 = O(d_\Theta / \sigma_\text{min}^2)$.
Setting $\kappa = \sigma_\text{max}/\sigma_\text{min}$ and optimizing $\hat\sigma^2$ yields $\mathrm{KL}(\rho\|\pi) = O(d_\Theta \log\kappa(W))$.
Substituting into~\eqref{eq:pac_general} with $d_\Theta \approx |\calV_w| r_s$ gives~\eqref{eq:pac_srp}.

\textit{Complexity interpretation.}  The bound depends on $|\calV_w|$ (number of under-served directions targeted by SRP) rather than on $k$ (total singular directions) or $d_\text{in}d_\text{out}$ (full-weight dimension).  Since DPMS-guided prompting restricts capacity to the weak subspace, the effective model complexity is $|\calV_w| r_s \ll k r_s \ll d_\text{in}d_\text{out}$, directly linking the DPMS-based design philosophy to a tighter generalization bound.
\end{proof}

\subsection{Algebraic Equivalence of the Projective Fallback}
\label{subsec:appendix_fallback}

In our SRP architecture, we introduce a projective fallback path when the input dimension is small, specifically when $d_\text{in} \leq k + m$. In this section, we rigorously prove that this fallback is not a heuristic deviation, but rather an exact algebraic simplification of the dual-channel architecture.

\begin{theorem}[Algebraic Isomorphism of the Fallback]
\label{thm:fallback_equivalence}
Let the pre-trained weight have rank $k$, and let $V \in \mathbb{R}^{d_\text{in} \times k}$ be its right singular vectors. Let $\mu \in \mathbb{R}^k$ denote the soft-threshold mask. When $d_\text{in} - k \leq m$, the projective fallback parameterized by a single adaptor $P \in \mathbb{R}^{d_\text{in} \times r_s}$ acting on $h_\text{weak}$ spans the exact same hypothesis space as the dual-channel prompt parameterized by $A_s \in \mathbb{R}^{k \times r_s}$ and $A_n \in \mathbb{R}^{(d_\text{in}-k) \times r_s}$.
\end{theorem}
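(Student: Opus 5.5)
The plan is to write both parameterizations as linear maps $h \mapsto h^\top M$ with $M \in \mathbb{R}^{d_\text{in}\times r_s}$, and then show that the two admissible sets of $M$ coincide. The shared projection $Q$ and bias $b_p$ are common to both paths, so it suffices to compare the pre-$Q$ bottleneck outputs.

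\textbf{Dual-channel path.} Here the output is
\[
h^\top\bigl(V\,\mathrm{diag}(\mu)\,A_s + V_\perp A_n\bigr).
\]
When $d_\text{in}-k\le m$, the null-space PCA basis $V_\perp$ can be taken to span all of $\ker(W)=\operatorname{span}(V)^\perp$. It is then a square orthonormal basis of the complement, of size $d_\text{in}\times(d_\text{in}-k)$, which is consistent with the stated shape of $A_n$. The matrix $B:=[V,\;V_\perp]$ is therefore orthogonal, and the dual-channel map is
\[
M_{\text{dual}} = B\begin{pmatrix}\mathrm{diag}(\mu)A_s\\ A_n\end{pmatrix}.
\]

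\textbf{Fallback path.} Write $h_\text{weak}=h-VD_{1-\mu}V^\top h$ with $D_{1-\mu}=\mathrm{diag}(1-\mu)$. Then
\[
h_\text{weak}=\bigl(VD_\mu V^\top+V_\perp V_\perp^\top\bigr)h = B\,\mathrm{diag}(\mu,\mathbf{1})\,B^\top h,
\]
using $VV^\top+V_\perp V_\perp^\top=I$. The fallback output is thus
\[
h^\top M_{\text{fb}},\qquad M_{\text{fb}}=B\,\mathrm{diag}(\mu,\mathbf{1})\,B^\top P .
\]

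\textbf{Matching the two.} Given $(A_s,A_n)$, set $P:=B\binom{A_s}{A_n}$. Then $B^\top P=\binom{A_s}{A_n}$, so $M_{\text{fb}}=M_{\text{dual}}$. Conversely, given $P$, define $\binom{A_s}{A_n}:=B^\top P$. Since $B$ is orthogonal, the map $P\leftrightarrow(A_s,A_n)$ is a linear bijection between $\mathbb{R}^{d_\text{in}\times r_s}$ and $\mathbb{R}^{k\times r_s}\times\mathbb{R}^{(d_\text{in}-k)\times r_s}$. The parameter counts also agree, since $k+(d_\text{in}-k)=d_\text{in}$. This gives the claimed isomorphism of hypothesis spaces for every fixed $\mu$, and hence also when $\tau_0$ is learned jointly.

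\textbf{Main obstacle: the null-space basis.} The delicate step is justifying that $V_\perp$ spans the whole null space. Null-space PCA on $\mathbf{X}_\perp$ may have rank smaller than $d_\text{in}-k$, because data can be degenerate in $\ker(W)$. I would handle this by taking $m'=\min(m,d_\text{in}-k)$ and completing the PCA basis with any orthonormal complement inside $\ker(W)$. Theorem~\ref{thm:nspca} guarantees that the top components lie in $\ker(W)$, and adding zero-variance directions does not shrink the hypothesis space. If one insists on the raw PCA basis of smaller rank, I would note instead that the dual-channel space is contained in the fallback space, because the $A_n$ block can be set to zero on the missing directions. Equality then holds exactly when $\operatorname{rank}(\mathbf{X}_\perp)=d_\text{in}-k$, or when restricting attention to inputs $h$ in the data span.

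A minor notational point needs care: the main text writes $A_s$ as $r_s\times k$ and uses row-vector conventions. I would fix the column convention stated in Theorem~\ref{thm:fallback_equivalence} at the outset so that the transposes line up.
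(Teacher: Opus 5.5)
Your proposal is correct and follows the paper's proof. Both arguments write $h_\text{weak}=V\,\mathrm{diag}(\mu)V^\top h+P_\mathcal{N}h$, use that $[V,\,V_\perp]$ is orthogonal when $d_\text{in}-k\le m$, and identify the parameterizations via $P=VA_s+V_\perp A_n$ (your $P=B\binom{A_s}{A_n}$), with the converse given by $B^\top P$. Your caveat about a rank-deficient null-space PCA basis, handled by completing it with an orthonormal complement inside $\ker(W)$, is a refinement the paper simply assumes away.
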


\begin{proof}
First, we analyze the structure of the dual-channel prompt. The hidden representation $h$ can be orthogonally decomposed into the reachable subspace $h_\parallel = hVV^\top$ and the exact null space $h_\perp = h(I - VV^\top) = hP_\mathcal{N}$. 
In the dual-channel mode, the target response $Z_\text{dual}$ is the sum of the masked spectral channel and the null-space channel:
\begin{equation}
    Z_\text{dual} = (hV \mathrm{diag}(\mu)) A_s + (h V_{\perp}^\top) A_n,
    \label{eq:fallback_1}
\end{equation}
where $V_{\perp}^\top$ acts as the PCA basis matrix of the null space. Crucially, when $d_\text{in} - k \leq m$, the required PCA dimension $m$ exceeds or equals the dimension of the entire null space. Thus, the columns of $V_{\perp}^\top$ simply form a complete orthonormal basis for $P_\mathcal{N}$, meaning $V_{\perp}^\top V_{\perp} = P_\mathcal{N}$.

Next, we analyze the input to the projective fallback, defined as $h_\text{weak} = h - (hV \mathrm{diag}(1-\mu))V^\top$. We can expand $h$ using the orthogonal decomposition $h = hVV^\top + hP_\mathcal{N}$:
\begin{align}
    h_\text{weak} &= (hVV^\top + hP_\mathcal{N}) - hV \mathrm{diag}(1-\mu) V^\top \nonumber \\
    &= hV \big(I - \mathrm{diag}(1-\mu)\big) V^\top + hP_\mathcal{N} \nonumber \\
    &= hV \mathrm{diag}(\mu) V^\top + hP_\mathcal{N}.
    \label{eq:fallback_2}
\end{align}
Equation \eqref{eq:fallback_2} reveals a critical property: $h_\text{weak}$ flawlessly preserves both the masked weak spectral signals and the entirety of the null-space signals, seamlessly superimposing them in the original ambient space $\mathbb{R}^{d_\text{in}}$.

Applying the single linear adaptor $P$ to $h_\text{weak}$ yields:
\begin{equation}
    Z_\text{fallback} = h_\text{weak} P = \big(hV \mathrm{diag}(\mu)\big) (V^\top P) + (h P_\mathcal{N}) P.
    \label{eq:fallback_3}
\end{equation}
To prove the isomorphism, we must show that there exists a matrix $P$ such that $Z_\text{fallback} = Z_\text{dual}$ for any arbitrary $A_s$ and $A_n$. By comparing Eq. \eqref{eq:fallback_3} with Eq. \eqref{eq:fallback_1}, and using the identity $h P_\mathcal{N} P = h V_{\perp}^\top (V_{\perp} P)$, this equivalence requires:
\begin{equation}
    V^\top P = A_s \quad \text{and} \quad V_{\perp} P = A_n.
    \label{eq:fallback_4}
\end{equation}
Since $V$ and $V_{\perp}^\top$ span mutually orthogonal complementary subspaces, the concatenated matrix $[V, V_{\perp}^\top] \in \mathbb{R}^{d_\text{in} \times d_\text{in}}$ is an orthogonal square matrix, and thus strictly invertible. Consequently, we can uniquely solve for $P$:
\begin{equation}
    P = V A_s + V_{\perp}^\top A_n.
    \label{eq:fallback_5}
\end{equation}
Equation \eqref{eq:fallback_5} demonstrates that optimizing the single matrix $P$ over $h_\text{weak}$ is mathematically identical to independently optimizing $A_s$ and $B_n$ in their respective sub-manifolds. Therefore, when $d_\text{in} \leq k + m$, the projective fallback explicitly circumvents the redundant PCA calculations while strictly maintaining the theoretical capacity and spectral-reversal philosophy of SRP.
\end{proof}

\section{Additional Experiments and Analysis}
\label{sec:additional_analysis}

\subsection{Generalization to GraphGPS and Graphormer}
\label{sec:graph_transformers}

To evaluate whether SRP extends beyond conventional message-passing GNNs, we replace the default backbone with GraphGPS and Graphormer and evaluate on two node-level datasets (Cora and CiteSeer) and two graph-level datasets (NCI1 and ENZYMES). We follow the same pre-training objective, data split, few-shot setting, training budget, and evaluation protocol as in the main experiments. Classifier-only, GraphLoRA, and SRP use the same frozen encoder and downstream splits. We retain the default SRP configuration with adaptor dimension $r=32$ and PCA dimension $m=16$. Results are averaged over five random seeds.

\begin{table}[ht]
\centering
\small
\caption{Performance of SRP with GraphGPS and Graphormer backbones. Results are reported as mean $\pm$ standard deviation over five runs.}
\label{tab:graph_transformer}
\begin{tabular}{llccc}
\toprule
Backbone & Dataset & Classifier-only & GraphLoRA & SRP \\
\midrule
GraphGPS & Cora     & $48.17\pm6.12$ & $64.33\pm6.88$ & $\mathbf{66.62\pm5.27}$ \\
GraphGPS & CiteSeer & $39.13\pm4.57$ & $46.80\pm3.38$ & $\mathbf{58.74\pm5.01}$ \\
GraphGPS & NCI1     & $63.32\pm1.51$ & $62.49\pm1.93$ & $\mathbf{66.81\pm2.13}$ \\
GraphGPS & ENZYMES  & $35.08\pm2.82$ & $45.00\pm3.25$ & $\mathbf{47.75\pm2.71}$ \\
\midrule
Graphormer & Cora     & $51.13\pm4.04$ & $62.88\pm5.49$ & $\mathbf{65.69\pm5.97}$ \\
Graphormer & CiteSeer & $39.01\pm1.74$ & $47.13\pm1.72$ & $\mathbf{54.89\pm1.60}$ \\
Graphormer & NCI1     & $56.75\pm2.76$ & $\mathbf{60.54\pm1.03}$ & $59.72\pm2.75$ \\
Graphormer & ENZYMES  & $26.75\pm2.41$ & $\mathbf{39.67\pm2.49}$ & $38.92\pm2.13$ \\
\bottomrule
\end{tabular}
\end{table}

SRP improves over Classifier-only in all eight settings, with an average absolute gain of $12.48$ percentage points. It achieves the best result in six settings and remains within $0.82$ percentage points of GraphLoRA in the other two. These results indicate that spectral rebalancing is not specific to standard message-passing GNNs and can also be applied to hybrid local--global and attention-based graph encoders.

\subsection{Role and Sensitivity of the Temperature Coefficient}
\label{sec:gamma_sensitivity}

Recall that the spectral mask is defined as
\begin{equation}
    \mu_j = \operatorname{sigmoid}\!\left((\tau\sigma_{\max}-\sigma_j)\gamma\right),
\end{equation}
where $\tau$ determines the spectral cutoff and $\gamma$ controls the sharpness of the transition around it. The interval $\mu_j\in[0.1,0.9]$ corresponds to a transition width of approximately $4.39/\gamma$ in singular-value units. Thus, $\gamma$ controls the smoothness of the partition without changing the cutoff determined by $\tau$.

We evaluate $\gamma$ from an extremely soft mask to an approximately discrete partition. Results are reported as mean accuracy over five seeds.

\begin{table}[ht]
\centering
\small
\caption{Sensitivity to the temperature coefficient $\gamma$.}
\label{tab:gamma_sensitivity}
\begin{tabular}{lcccc}
\toprule
Mask setting & Cora & CiteSeer & NCI1 & ENZYMES \\
\midrule
$\gamma=0.001$ & 64.55 & 47.64 & 66.63 & 36.42 \\
$\gamma=0.1$   & 64.88 & 48.03 & 66.52 & \textbf{36.92} \\
$\gamma=1$     & 67.80 & 48.81 & \textbf{66.96} & 36.50 \\
$\gamma=10$    & \textbf{68.10} & \textbf{48.99} & 66.80 & 36.67 \\
$\gamma=100$   & 68.06 & 48.87 & 66.84 & 36.25 \\
Hard threshold (STE) & 67.95 & 48.90 & 66.89 & 35.67 \\
\bottomrule
\end{tabular}
\end{table}

At $\gamma=0.001$, the mask is nearly constant around $0.5$, with entropy close to one bit, and therefore provides little distinction between strong and weak directions. Increasing $\gamma$ to $10$ improves accuracy by $3.55$ points on Cora and $1.35$ points on CiteSeer. For $\gamma\geq1$, performance becomes stable. At $\gamma=10$, the mask entropy decreases to $0.004$--$0.064$ bits, and only $0.3\%$--$3.6\%$ of directions remain in the transition region. Further sharpening therefore affects only a small number of boundary directions. We use $\gamma=10$ for all datasets without dataset-specific tuning.

To separate spectral selection from generic dimensionality reduction, we further construct masks with exactly the same cardinality as the learned $\gamma=10$ mask, while selecting the strongest, weakest, or random singular directions.

\begin{table}[ht]
\centering
\small
\caption{Matched-cardinality comparison of different spectral selections.}
\label{tab:mask_selection}
\begin{tabular}{lccccc}
\toprule
Dataset & Strong-$k$ & Random-$k$ & Weak-$k$ & Hard (STE) & Soft $\gamma=10$ \\
\midrule
Cora     & 64.25 & 64.44 & 67.80 & 67.95 & \textbf{68.10} \\
CiteSeer & 47.10 & 47.46 & 48.78 & 48.90 & \textbf{48.99} \\
NCI1     & 66.42 & 66.45 & 66.87 & \textbf{66.89} & 66.80 \\
\bottomrule
\end{tabular}
\end{table}

Weak-$k$ consistently outperforms Strong-$k$ and Random-$k$, particularly on Cora and CiteSeer, showing that the gain is not explained by dimensionality reduction alone. The similar performance of the soft and hard masks is also expected when few singular values lie near the cutoff. The sigmoid mask provides a differentiable relaxation during optimization while becoming nearly discrete after convergence.

\subsection{Extension to Heterophilic Graphs}
\label{sec:heterophily}

Although the main benchmarks are predominantly homophilic, the SRP architecture is not restricted to homophilic graphs. The connection can be understood from the spectral filtering induced by message passing. Consider
\begin{equation}
    Z=\widehat A W^\top H,
    \qquad
    W=U\Sigma V^\top,
\end{equation}
and let the normalized graph Laplacian be
\begin{equation}
    L=I-\widehat A_{\mathrm{sym}}
      =\Phi\Lambda\Phi^\top .
\end{equation}
For a graph signal $s$, its graph frequency can be measured by its normalized Dirichlet energy,
\begin{equation}
    \operatorname{freq}(s)
    =\frac{s^\top Ls}{s^\top s}.
\end{equation}
Since $\widehat A_{\mathrm{sym}}\phi_i=(1-\lambda_i)\phi_i$, repeated propagation scales the $i$-th graph Fourier component by $(1-\lambda_i)^K$ after $K$ propagation steps. Standard message passing therefore favors low-frequency components while attenuating high-frequency variation.

During pre-training, the features used to update the weights have already passed through this propagation operator. Consequently, the feature covariance observed by optimization is biased toward graph-frequency components preserved by message passing. This provides a graph-specific connection between the graph spectrum and the singular spectrum analyzed in our method.

SRP introduces its prompt after neighborhood aggregation:
\begin{equation}
    H^{(\ell+1)}
    =
    \phi\!\left(
    \widehat A W^{(\ell)\top}H^{(\ell)}
    +p^{(\ell)}(H^{(\ell)})
    \right).
\end{equation}
The prompt $p^{(\ell)}$ is therefore not filtered again by the same $\widehat A$ in the current layer, providing a complementary pathway for node-local and high-frequency information suppressed by the frozen message-passing branch.

We further evaluate SRP on three heterophilic datasets using GraphCL pre-training and the same hyperparameter configuration as in the main experiments. Across both homophilic and heterophilic graph data, prior approaches typically rely on carefully designed filters to balance the learned graph representations~\cite{li2024pcconv}, whereas our method demonstrates robustness to varying levels of heterophily without requiring such specialized designs.

\begin{table}[ht]
\centering
\small
\caption{Performance on heterophilic graph benchmarks.}
\label{tab:heterophily}
\begin{tabular}{lcccc}
\toprule
Dataset & Classifier-only & GraphLoRA & SRP & SRP vs.\ GraphLoRA \\
\midrule
Cornell
& $32.22\pm8.35$
& $45.56\pm14.34$
& $\mathbf{51.67\pm5.72}$
& $+6.11$ \\
Squirrel
& $23.08\pm1.98$
& $23.06\pm1.40$
& $\mathbf{25.12\pm1.47}$
& $+2.06$ \\
Chameleon
& $31.74\pm1.88$
& $31.43\pm1.44$
& $\mathbf{33.58\pm2.76}$
& $+2.15$ \\
\bottomrule
\end{tabular}
\end{table}

SRP outperforms GraphLoRA on all three datasets. The improvement is particularly clear on Squirrel and Chameleon, where GraphLoRA falls below the Classifier-only baseline while SRP remains above it. These results support the applicability of SRP beyond homophilic settings.

\subsection{Scalability and Runtime}
\label{sec:runtime_appendix}

The number of trainable SRP parameters is independent of the number of graph nodes and edges, while the dominant computation remains the frozen GNN backbone. Thus, SRP remains applicable to large graphs. In our experiments, including ogbn-arxiv with nearly $170{,}000$ nodes, all runs were conducted on a single NVIDIA RTX 3090 GPU with 24\,GB memory.

We additionally compare runtime against GPF, which has the lowest empirical per-epoch runtime among the prompt-learning baselines in our main experiments.

\begin{table}[ht]
\centering
\small
\caption{Per-epoch runtime in seconds. Lower is better.}
\label{tab:runtime}
\begin{tabular}{lcc}
\toprule
Dataset & GPF & SRP \\
\midrule
Cora     & 0.121 & \textbf{0.077} \\
CiteSeer & 0.131 & \textbf{0.084} \\
PubMed   & 0.678 & \textbf{0.508} \\
ENZYMES  & \textbf{0.221} & 0.292 \\
DD       & 0.191 & \textbf{0.136} \\
NCI1     & 0.342 & \textbf{0.297} \\
\bottomrule
\end{tabular}
\end{table}

SRP is faster than GPF on five of the six evaluated datasets, indicating that its spectral operations do not introduce substantial training-time overhead.

\subsection{Difference from GraphLoRA}
\label{sec:graphlora_difference}

GraphLoRA and SRP both introduce low-dimensional trainable components around a frozen GNN, but they address different adaptation objectives and construct different adaptation subspaces. GraphLoRA focuses on feature and structural distribution shifts between source and target graphs through a structure-aware low-rank adaptation branch. Its low-rank correction remains within the message-passing pathway:
\begin{equation}
    \Delta Z_{\mathrm{GraphLoRA}}^{(\ell)}
    =
    \widehat A\,\Delta W^{(\ell)\top}H^{(\ell)} .
\end{equation}

SRP instead targets the spectral capacity allocation of the frozen transformation. For
\begin{equation}
    W^{(\ell)}
    =U^{(\ell)}\Sigma^{(\ell)}V^{(\ell)\top},
\end{equation}
it explicitly constructs prompts from two complementary subspaces. The spectral channel can be written as
\begin{equation}
    P_{\mathrm{spec}}
    =
    \big(HV\odot\mu(\Sigma)\big)A_sQ,
    \qquad
    \mu_j
    =
    \operatorname{sigmoid}
    \!\left((\tau\sigma_{\max}-\sigma_j)\gamma\right),
\end{equation}
while the null-space channel is
\begin{equation}
    P_{\mathrm{null}}=HNA_nQ,
    \qquad
    \operatorname{col}(N)\subseteq\ker(W).
\end{equation}
The resulting layer is
\begin{equation}
    H^{(\ell+1)}
    =
    \phi\!\left(
    \widehat A W^{(\ell)\top}H^{(\ell)}
    +P_{\mathrm{spec}}+P_{\mathrm{null}}
    \right).
\end{equation}

Hence, SRP does not merely impose a low-rank parameter budget: it explicitly selects where adaptation is concentrated according to the singular geometry of the frozen transformation. GraphLoRA learns a flexible low-rank correction from graph-structural objectives, whereas SRP directs adaptation toward weak singular and null-space directions.

The position of the adaptation branch also differs. If
\begin{equation}
    H\Delta W^\top
    =\sum_i \phi_i c_i^\top ,
\end{equation}
then a GraphLoRA-style correction inside message passing becomes
\begin{equation}
    \widehat A\Delta W^\top H
    =
    \sum_i (1-\lambda_i)\phi_i c_i^\top ,
\end{equation}
and is therefore filtered by the propagation operator. In contrast,
\begin{equation}
    Z_{\mathrm{SRP}}
    =
    \widehat A W^\top H+p(H),
\end{equation}
injects $p(H)$ after aggregation, so its graph Fourier components are not multiplied again by $(1-\lambda_i)$ within the same layer. This gives SRP a complementary pathway for information suppressed by the frozen message-passing branch.

\subsection{Difference from Standard LoRA}
\label{sec:lora_difference}

Standard LoRA and SRP are both parameter-efficient adaptation methods, but differ in their adaptation target, trainable-subspace geometry, and interaction with graph propagation. LoRA freezes $W$ and learns an additive low-rank update
\begin{equation}
    W_{\mathrm{eff}}=W+\Delta W,
    \qquad
    \Delta W=BA,
    \qquad
    \operatorname{rank}(\Delta W)\leq r .
\end{equation}
For a message-passing GNN layer,
\begin{equation}
    Z_{\mathrm{LoRA}}
    =
    \widehat A(W+BA)^\top H
    =
    \widehat A W^\top H
    +\widehat A A^\top B^\top H .
\end{equation}
Thus, although the pretrained parameters remain frozen, LoRA changes the effective linear operator and parameterizes a low-rank subspace in weight-update space.

SRP instead leaves the original message-passing operator unchanged and learns a representation-space prompt:
\begin{equation}
    Z_{\mathrm{SRP}}
    =
    \widehat A W^\top H+p(H),
\end{equation}
where
\begin{equation}
    p(H)
    =
    \big[(HV\odot\mu(\Sigma))A_s+(HN)A_n\big]Q+b,
    \qquad
    \operatorname{col}(N)\subseteq\ker(W).
\end{equation}
The two channels therefore do not factorize a weight update; rather, they determine which information from the hidden representation is injected through the prompt.

Another distinction lies in the adaptation subspace. The low-rank constraint in LoRA controls the dimensionality of the update but does not explicitly determine how its energy is distributed across the singular directions of the frozen weight matrix. SRP instead derives its prompt-reading directions from the singular geometry of $W$ and redirects prompt capacity toward weak singular directions and $\ker(W)$.

This distinction is particularly relevant for GNNs. Let
\begin{equation}
    L=I-\widehat A
    =\Phi\Lambda\Phi^\top .
\end{equation}
If $A^\top B^\top H=\Phi C$, the LoRA correction becomes
\begin{equation}
    \Delta Z_{\mathrm{LoRA}}
    =
    \Phi(I-\Lambda)C,
\end{equation}
and is therefore filtered by the same graph propagation operator. By contrast, the SRP prompt is added after aggregation and is not multiplied again by $I-\Lambda$ in the current layer. In summary, LoRA provides generic low-rank operator adaptation, whereas SRP preserves the frozen operator and explicitly targets weak singular and null-space directions through a post-aggregation prompt.

\subsection{Clarification of the Singular-Value Groups in Figure~1}
\label{sec:figure1_clarification}

Let
\begin{equation}
    \sigma_1\geq\sigma_2\geq\cdots\geq\sigma_k>0
\end{equation}
denote the nonzero singular values of a frozen layer $W$, sorted in descending order. We define the five singular-value groups by
\begin{equation}
    b_q=\left\lfloor\frac{qk}{5}\right\rfloor,
    \qquad
    Q_q=\{b_{q-1}+1,\ldots,b_q\},
    \qquad q=1,\ldots,5,
\end{equation}
with $b_0=0$. Thus, $Q_1$ contains the directions with the largest singular values and $Q_5$ contains the weakest nonzero singular directions; group sizes differ by at most one. Exact zero-singular-value directions are treated separately by the null-space channel.

The quintile partition depends only on the spectrum of the frozen weight $W$. The downstream classifier $W_{\mathrm{down}}$ enters only through the downstream-demand term in DPMS and does not affect the group construction.

\subsection{Why SRP Shows Larger Gains on Node-Level Tasks}
\label{sec:node_graph_gap}

The difference between node- and graph-level performance can be attributed mainly to input dimensionality, null-space capacity, and graph-level readout.

\paragraph{Larger null-space capacity.}
For a first-layer transformation
\begin{equation}
    W\in\mathbb{R}^{h\times d_{\mathrm{in}}},
    \qquad
    W=U\Sigma V^\top,
\end{equation}
if $\operatorname{rank}(W)=k$, then
\begin{equation}
    \dim\ker(W)=d_{\mathrm{in}}-k
    \geq d_{\mathrm{in}}-h.
\end{equation}
Hence, when $d_{\mathrm{in}}\gg h$, the first frozen layer necessarily contains a large right null space. The node-classification datasets have input dimensions of $1433$ (Cora), $3703$ (CiteSeer), $500$ (PubMed), $500$ (Flickr), and $128$ (ogbn-arxiv), whereas the graph-classification datasets contain only $3$--$89$ input dimensions. With $h=128$, the null-space dimensions of the first-layer weights on Cora and CiteSeer are at least $1305$ and $3575$, respectively; on Cora we indeed observe a $1305$-dimensional null space. In contrast, when $d_{\mathrm{in}}\leq h$, the first layer may have full column rank and
\begin{equation}
    \dim\ker(W)=0.
\end{equation}
Graph-level datasets therefore provide substantially less capacity for the null-space channel, and SRP relies more heavily on spectral reversal or the projective fallback.

\paragraph{Graph-level readout.}
Node classification predicts directly from each node representation:
\begin{equation}
    \hat y_i=C(z_i+p_i),
\end{equation}
so a node-local prompt directly affects the prediction of node $i$. Graph classification instead aggregates node representations. With mean pooling,
\begin{equation}
    g=\frac{1}{N_g}\mathbf{1}^\top H,
    \qquad
    \Delta g=\frac{1}{N_g}\mathbf{1}^\top P.
\end{equation}
If $P=\Phi C$, then
\begin{equation}
    \Delta g
    =
    \frac{1}{N_g}
    \sum_j(\mathbf{1}^\top\phi_j)c_j^\top .
\end{equation}
For the combinatorial Laplacian of a connected graph,
\begin{equation}
    \mathbf{1}^\top\phi_j=0,
    \qquad j>1.
\end{equation}
Mean readout therefore preserves the constant component while cancelling many nonstationary node-local variations. Sum pooling has the same directional effect up to scaling, which can reduce the contribution of local information recovered by SRP.

\paragraph{Cross-graph consistency.}
For node classification on a single graph, define the graph frequency of the $j$-th feature direction by
\begin{equation}
    q_j
    =
    \frac{(Hv_j)^\top L(Hv_j)}
         {\|Hv_j\|_2^2}.
\end{equation}
This quantity has a consistent meaning on the same graph. For graph classification, however,
\begin{equation}
    q_{g,j}
    =
    \frac{(H_gv_j)^\top L_g(H_gv_j)}
         {\|H_gv_j\|_2^2},
\end{equation}
and the same $v_j$ can correspond to different graph frequencies across different graphs. Large variation in $q_{g,j}$ weakens the correspondence between a shared singular direction and complementary graph-frequency information. These factors help explain why the empirical gains are generally larger on node-level benchmarks.

\subsection{Why the Gains Are Larger on Some Smaller Datasets}
\label{sec:dataset_size}

The observed trend is better explained by spectral complement capacity than by dataset size alone, because input dimension and graph size are confounded in the current benchmarks. Cora and CiteSeer contain $2{,}708$ and $3{,}327$ nodes but have input dimensions of $1{,}433$ and $3{,}703$, respectively. In contrast, ogbn-arxiv contains $169{,}343$ nodes with only $128$ input dimensions, while Flickr contains $89{,}250$ nodes with $500$ input dimensions.

The exact right-null-space dimension is
\begin{equation}
    \dim\ker(W)
    =
    d_{\mathrm{in}}-\operatorname{rank}(W),
\end{equation}
which depends on the input dimension and weight rank rather than directly on the number of nodes. Cora and CiteSeer therefore provide substantially greater capacity for the null-space channel.

A second factor is local frequency heterogeneity. SRP currently uses a shared right-singular basis $V$ and a global spectral threshold,
\begin{equation}
    \mu_j
    =
    \operatorname{sigmoid}
    \!\left((\tau\sigma_{\max}-\sigma_j)\gamma\right).
\end{equation}
For a graph partitioned into regions $\mathcal{R}_1,\ldots,\mathcal{R}_M$, define
\begin{equation}
    x_{r,j}=H_{\mathcal{R}_r}v_j,
    \qquad
    q_{r,j}
    =
    \frac{x_{r,j}^\top L_{\mathcal{R}_r}x_{r,j}}
         {x_{r,j}^\top x_{r,j}}.
\end{equation}
When $\operatorname{Var}_r(q_{r,j})$ is large, the same singular direction can represent different graph-frequency characteristics across regions. A single global threshold can therefore become less precise as local frequency heterogeneity increases. Accordingly, the observed performance trend is more closely associated with input dimensionality, null-space capacity, and local frequency heterogeneity than with graph size itself.

\section{Computational Complexity}
\label{app:complexity}

\subsection{Compute Resources Usage.}
All experiments are conducted on a workstation running the Ubuntu 22.04 LTS operating system. The hardware configuration includes a 13th Gen Intel(R) Core(TM) i9-13900K CPU and an NVIDIA GeForce RTX 3090 GPU equipped with 24GB of memory. Furthermore, the software environment is built using CUDA 12.1 and Python 3.10.19, utilizing PyTorch 2.5.1 and PyTorch Geometric 2.7.0 for the model implementations.

\subsection{Complexity Analysis.}
SRP's per-layer forward pass: (i) $V$-projection $O(|\calV|kd_\text{in})$ (pre-computable); (ii) mask $O(|\calV|k)$; (iii) $A$-projection $O(|\calV|kr_s)$; (iv) $N$-projection $O(|\calV|md_\text{in})$ (pre-computable); (v) $B$-projection $O(|\calV|mr_s)$; (vi) $Q$-projection $O(|\calV|r_s d_\text{out})$.
Trainable-path cost: $O(|\calV|(k+m)r_s + |\calV|r_s d_\text{out})$, negligible vs.\ backbone's $O(|\calE|d_\text{in})$.

We analyze the time and space complexity of SRP under a two-layer GCN backbone
with $L = 2$ frozen layers.
Let $N$ denote the number of nodes, $|\mathcal{E}|$ the number of edges,
$d$ the raw input feature dimension, $h$ the hidden (and output) dimension,
$k = \operatorname{rank}(W) \le h$ the numerical rank of each frozen weight
$W \in \mathbb{R}^{h \times d_{\mathrm{in}}}$,
$m$ the null-space PCA dimension (\texttt{null\_pca\_dim}, default $m{=}64$),
and $r$ the shared bottleneck rank (\texttt{r\_shared}, default $r{=}8$).

\textbf{Initialisation.}
The initialisation of SRP involves two SVD computations and one null-space
projection, all performed once before training.
For each frozen weight $W_i$, the reduced SVD $W_i = U_i \Sigma_i V_i^\top$
costs $\mathcal{O}(k^2 d_{\mathrm{in},i})$ time
(thin SVD of an $h \times d_{\mathrm{in},i}$ matrix).
For the first layer, when $d > k + m$ holds and the dual-channel path is
activated, SRP additionally projects the node-feature matrix
$X \in \mathbb{R}^{N \times d}$ onto the null space via
$X_{\mathrm{null}} = X - X V_0 V_0^\top$,
costing $\mathcal{O}(N d k)$,
followed by a truncated SVD of the centred $X_{\mathrm{null}}$ to obtain the
top-$m$ null-space PCA basis $N_{\mathrm{mat}} \in \mathbb{R}^{d \times m}$,
which costs $\mathcal{O}(\min(N,d)^2 \max(N,d))$ but is performed on CPU once
and cached as a frozen buffer.
Summing across layers, the total initialisation time is
$\mathcal{O}(k^2 d + N d k + \min(N,d)^2 \max(N,d))$,
which for typical datasets satisfies $Nd \ll \min(N,d)^2\max(N,d)$,
so the dominant term is the one-time null-space PCA.
This cost is paid once before any gradient step and does not affect training
throughput.

\textbf{Per-forward-pass time complexity.}
During training and inference, \texttt{get\_prompt} is invoked once per GNN
layer per forward pass.
For the first layer operating in the dual-channel mode
($d > k + m$), the three dominant operations are
$h_v = h W_0^{-} V_0 \in \mathbb{R}^{N \times k}$ at cost $\mathcal{O}(Ndk)$,
$h_{\mathrm{null}} = h W_0^{-} N_{\mathrm{mat}} \in \mathbb{R}^{N \times m}$
at cost $\mathcal{O}(Ndm)$,
and the two bottleneck projections
$z_{\mathrm{spec}} = (h_v \odot \mathbf{s}) A \in \mathbb{R}^{N \times r}$
and $z_{\mathrm{null}} = h_{\mathrm{null}} B \in \mathbb{R}^{N \times r}$
at cost $\mathcal{O}(N(kr + mr))$,
followed by the output projection $z Q + \mathbf{b}$ at cost $\mathcal{O}(Nrh)$.
Since $k, m \ll d$ and $r \ll k, m$, the total for layer~0 is
$\mathcal{O}(Nd(k + m))$.
For the second layer operating under the projective fallback ($d_{\mathrm{in}}
= h \le k + m$), the cost is $\mathcal{O}(Nh^2 + Nhr)$,
which collapses to $\mathcal{O}(Nh^2)$.
The backbone GCN message-passing itself costs $\mathcal{O}(|\mathcal{E}|h +
Ndh)$ across the two layers.
Combining prompt and backbone, the overall per-forward-pass complexity is
\[
  \mathcal{O}\!\bigl(Nd(k + m) + Nh^2 + |\mathcal{E}|h\bigr)
  \;=\;
  \mathcal{O}\!\bigl(Ndh + |\mathcal{E}|h\bigr),
\]
where the equality uses $k \le h$ and $m \ll h$.
Thus SRP adds no asymptotic overhead beyond the backbone GNN's own
message-passing and projection costs.

\textbf{Parameter count and space complexity.}
SRP introduces only a small set of trainable parameters.
For layer~0 (dual-channel): $A_0 \in \mathbb{R}^{k \times r}$,
$B_0 \in \mathbb{R}^{m \times r}$, $Q_0 \in \mathbb{R}^{r \times h}$,
$\mathbf{b}_0 \in \mathbb{R}^{h}$, and the scalar threshold $\tau_0$,
contributing $kr + mr + rh + h + 1 = r(k + m + h) + h + 1$ parameters.
For layer~1 (projective fallback): $P_1 \in \mathbb{R}^{h \times r}$,
$Q_1 \in \mathbb{R}^{r \times h}$, $\mathbf{b}_1 \in \mathbb{R}^{h}$,
and $\tau_1$, contributing $2rh + h + 1$ parameters.
The total trainable parameter count is therefore
$r(k + m + 3h) + 2h + 2$.
The additional memory consists of frozen buffers:
$V_0 \in \mathbb{R}^{d \times k}$ and
$N_{\mathrm{mat}} \in \mathbb{R}^{d \times m}$ for layer~0,
and $V_1 \in \mathbb{R}^{h \times k}$ for layer~1,
totalling $d(k + m) + hk$ floating-point values.
The activation memory during a forward pass is
$\mathcal{O}(Nd)$ for the null-space projections of the full feature matrix,
which is at most $\mathcal{O}(Nd)$—identical to the GCN input tensor.
Overall, SRP's space overhead is $\mathcal{O}(d(k+m) + hk)$ in parameters
and buffers, with the number of \emph{trainable} parameters scaling as
$\mathcal{O}(r(k + m + h)) = \mathcal{O}(rh)$, far smaller than any
full fine-tuning or LoRA baseline of comparable expressiveness.

\end{document}